\theoremstyle{plain}
\newtheorem{thm}{Theorem}[section]
\newtheorem{lem}[thm]{Lemma}
\theoremstyle{definition}
\theoremstyle{remark}
\newenvironment{proofsketch}{
  \begin{proof}
}{\end{proof}}
\begin{document}

\title{Mitigating Barren Plateaus in Quantum Denoising Diffusion Probabilistic Models}

\author{Haipeng Cao}
\thanks{The authors contributed equally to this work.}
\affiliation{School of Computer Science and Technology, University of Science and Technology of China, Hefei, Anhui, China}

\author{Kaining Zhang}
\thanks{The authors contributed equally to this work.}
\affiliation{College of Computing and Data Science, Nanyang Technological University, Singapore, Singapore}

\author{Dacheng Tao}
\affiliation{College of Computing and Data Science, Nanyang Technological University, Singapore, Singapore}

\author{Zhaofeng Su}
\email{zhaofsu@polyu.edu.hk}
\affiliation{Research Institute of Quantum Technology, The Hong Kong Polytechnic University, Hung Hom, Hong Kong, China}
\affiliation{School of Computer Science and Technology, University of Science and Technology of China, Hefei, Anhui, China}

\date{\today}

\begin{abstract}
Quantum generative models exploit quantum superposition and entanglement to enhance learning efficiency for both classical and quantum data. Recently, inspired by classical diffusion frameworks, the quantum denoising diffusion probabilistic model (QuDDPM) has emerged as a powerful tool for learning correlated noise models, many-body phases, and topological data structure. However, we demonstrate that QuDDPM's efficacy is currently restricted to small-scale systems (typically $\le$ 5 qubits). As the system size increases, a severe barren plateau (BP) problem emerges, fundamentally limiting the model's scalability. We provide rigorous theoretical proofs and experimental validation to identify the origin of this BP, distinct from previously known causes. To restore trainability, we introduce an architectureal enhancement that mitigates the BP and ensures training stability. Furthermore, we propose a conditional QuDDPM, capable of generating ground states based on Hamiltonian parameters, significantly expanding the utility of quantum generative models for complex quantum state preparation. Our approach not only restores the scalability and trainability bottlenecks of quantum diffusion models but also provides a robust tool for exploring complex quantum matter and state preparation in the NISQ era.

\end{abstract}

\maketitle

\section{Introduction}
Quantum computing is a new computing paradigm based on quantum mechanics~\cite{NL00}, which has significant advantage in computing speed over its classical counterpart. The advantage originates from the distinctive phenomena such as entanglement and nonlocality of quantum mechanics~\cite{ZhaofengQIP2018, ZhaofengPRA2020}, and has been theoretically convinced by novel quantum algorithms and protocols~\cite{Shor94, Grover96, ZhaofengPRA2018}. We have now entered the Noisy Intermediate-Scale Quantum (NISQ) era, chracterized by devices with dozens to hundreds of physical qubits~\cite{Preskill2018}. NISQ devices has physically demonstrated the advantages of quantum computing in both superconducting and photon approaches~\cite{GoogleSycamore2019, PanScience2020}. However, the interent noise in NISQ devices remains a primary obstacle to the reliability of the large-scale quantum circuits and thus limits the scope of practical applications. 

Interestingly, classical machine learning (ML) often demonstrates resilience to noise. For instance, controlled noise in stochastic gradient descent can assist in escaping sharp local minima and saddle points~\cite{Chi-JACM2021}. In the quantum domain, classical ML has already provided transformative solutions for many challenging tasks such as quantum state tomography, phase transition analysis, preparation of complex quantum states, and the efficient representation of many-body wavefunctions~\cite{CML2mp1,CML2mp2,CML2mp3,wf1,QST1,QST2}. However, classical approaches are fundamentally constrained by the "curse of dimensionality", as the Hilbert space grows exponentially with the number of qubits~\cite{wf1, huang2022provably}. 

Quantum Machine Learning (QML) addresses these limitations by integrating quantum circuits into ML framework to harness the high-dimensional expressivity, representing a highly promising frontier~\cite{Wang_RPP2024,Sweke2020,Cerezo-NCS2022}. 
Significant progress has been made in extending classical models to the quantum setting, including quantum support vector machines~\cite{Patrick-QSVM2014}, quantum principal component analysis~\cite{Lloyd-NP2014}, quantum neural networks~\cite{Kerstin-NC2020}, and the robustness verification of QML models~\cite{ji2024robustness}.
Most current QML research operates within the framework of variational quantum algorithms~\cite{cerezo2021variational}, which utilize hybrid quantum-classical optimization loops to address challenging problems in quantum chemistry~\cite{Alberto-NC2014} and combinatorial optimization\cite{farhi-QAOA2014}. Within the subset of generative modeling, several architectures have been explored to learn and reproduce quantum data distributions, including quantum generative adversarial networks(QuGANs)~\cite{QuGAN, L.M.Duan-GAN-2018}, quantum variational autoencoders (QVAEs) \cite{QVAE}, quantum circuit Born machines (QCBMs)~\cite{QCBM_1,QCBM_2}. While these models offer significant promise for state preparation, they often suffer from training instabilities, such as the mode collapse observed in QuGANs~\cite{EQuGAN}. 

Recently, the quantum denoising diffusion probabilistic model (QuDDPM)~\cite{QuDDPM} has been introduced to address these shortcomings. By adapting the hierarchical denosing logic of claasical diffusion models, QuDDPM has demonstrated an unprecedented ability to learn correlated noise models and reconstruct complex many-body phases. Howver, the scalability of QuDDPM is hindered by a critical bottleneck: the Barren Plateau (BP) phenomenon. In a BP, the variance of loss function gradient vanishes exponentially with the system size, rendering the landscape flat and the model untrainable~\cite{0Barren}. While existing literature has identified several triggers for BPs~\cite{0Barren,BP_Entanglement_1,BP_Entanglement_2,BP_Noise} and proposed solutions~\cite{init_BP,NEURIPS2022_7611a3cb,ansatz_BP}, the underlying mechanism of BPs in the diffusion-based qauntum models remains largely unexplored. 

\begin{figure*}[t]
    \vskip 0.2in
    \begin{center}
    \centerline{\includegraphics[width=1.6\columnwidth]{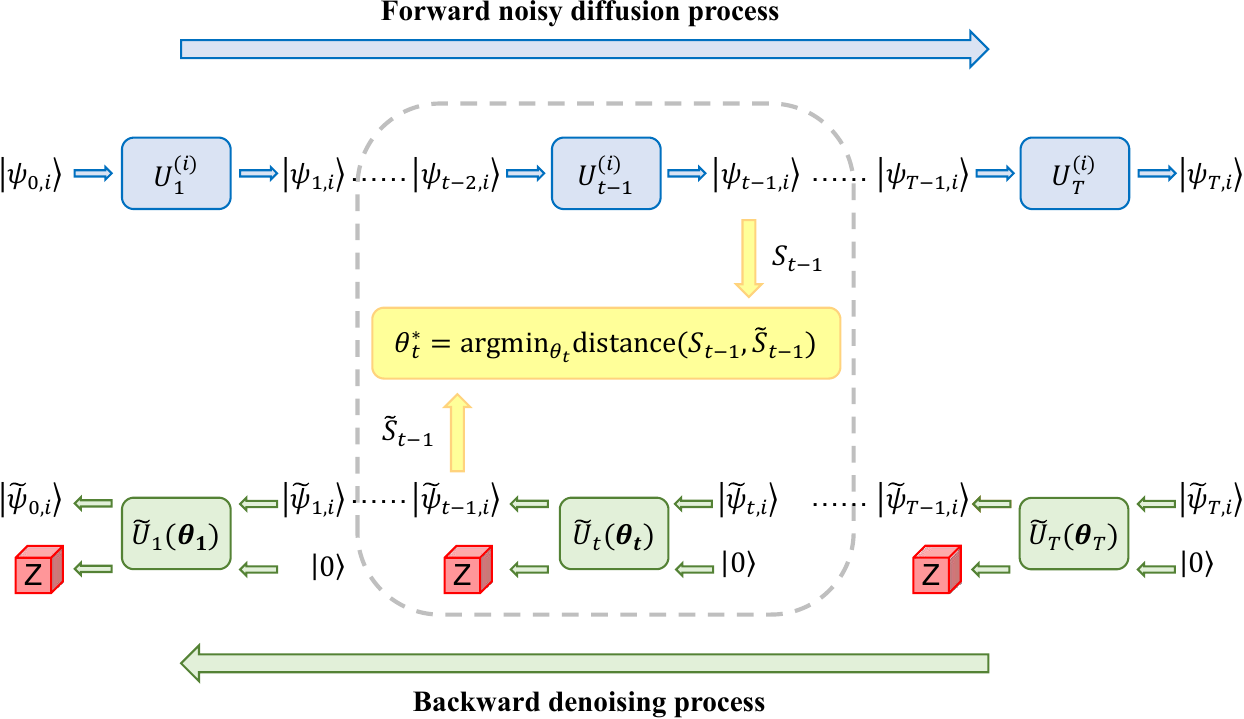}}
    \caption{ Structure of QuDDPM. The top part of the figure shows the forward noisy diffusion process and the bottom part shows the backward denoising process. }
    \label{fig:structure of QuDDPM}
    \end{center}
    \vskip -0.2in
\end{figure*}

In this work, we systematically investigate the BP phenomenon in QuDDPM. We provide both theoretical and numerical evidence for the existence of BPs in QuDDPM. Our analysis further shows that this phenomenon is induced by Haar-random states, revealing a previously unrecognized origin of BPs. Based on this insight, we propose a mitigation strategy that introduces an auxiliary qubits system to perturb the generated states and guide them toward the target state. Numerical results demonstrate that this approach enables the model to escape BPs and significantly improves trainability compared with the original QuDDPM. 

Furthermore, we extend the model to a \textit{conditional QuDDPM}, in which the parameters of a given Hamiltonian are provided as input and the corresponding ground state is generated. This extension broadens the applicability of the model to quantum many-body problems. The numerical results for the conditional QuDDPM is validated through two-point correlation functions and classification of matter phases. The results show that the generated states accurately compute the corresponding two-point correlation functions and achieve classification of matter phases, demonstrating the strong expressive power of the improved model and its effectiveness for quantum applications.

\section{Overview of QuDDPM}

The recently proposed QuDDPM is the state-of-the-art quantum generation model, which was inspired by classical DDPMs. QuDDPM aims to generate new samples by learning from an ensemble of samples in the target distribution. Specifically, it is known that we are given a set of quantum states $\mathcal{S} = \{ |\psi_{k} \rangle\} \sim \varepsilon_0$ sampled from an unknown distribution $\varepsilon_0$, and the task of the model is to learn from $\mathcal{S}$ such that it can generate states that conform to the distribution $\varepsilon_0$. This quantum state set $\mathcal{S}$ is the training set of the model. This is accomplished by feeding samples from the target distribution into the forward diffusion process to add noise until they become Haar random states, then training the backward denoising process to invert the forward diffusion. Once training is complete, new samples can be generated by implementing the trained backward denoising process. Next, we present the details of QuDDPM.

\subsection{Structure of QuDDPM} 
The architecture of QuDDPM is shown in Fig.~\ref{fig:structure of QuDDPM}. QuDDPM consists of two main components: the forward diffusion process and the reverse denoising process, each comprising T timesteps. In the forward process, the training set $ \mathcal{S} $ is taken as input; at each timestep, a quantum scrambling circuit (QSC) adds random noise to each state in $ \mathcal{S} $, thereby producing a set of intermediate states. The distributions corresponding to these intermediate states serve as the intermediate target distributions for the corresponding timestep in the reverse process. After T timesteps of the forward process, the resulting states are those that conform to the Haar distribution. The backward process takes Haar-random states as input and aims to remove noise from these states through T parameterized quantum circuits (PQCs), such that the finally generated states conform to the target distribution. Each timestep in the backward process is also referred to as a training cycle, with each cycle corresponding to one PQC. By learning the intermediate states generated in the forward process, the model optimizes the parameters of the PQC so that the states processed by the PQC match the corresponding intermediate target distributions.

\subsection{Quantum circuit structure} 
Fig.~\ref{fig:circuit} illustrates the structure of the quantum circuits involved in QuDDPM. Subfigure (a) shows the circuit structure of the QSC, which generates different random parameters for each state in the state set, thereby adding noise to achieve the diffusion effect. Subfigure (b) illustrates the circuit structure of the PQC. It is noted that, in addition to the $n$ data qubits, the PQC also processes $n_A$ ancilla qubits. After the $(n + n_A)$ qubits are collectively processed by the PQC, a measurement operation is performed on the $n_A$ ancilla qubits to introduce non-linear operations and enhance the expressivity of the circuit. The resulting state of the $n$ data qubits at this stage represents the output state produced by the current PQC processing.

\subsection{Single-Cycle Local Structure and Parameter Optimization Mechanism} 
As shown in the dashed box of Fig.~\ref{fig:structure of QuDDPM}, this part provides the details of training cycle t (with training cycles labeled from T down to 1). This training cycle takes as input the output from the previous cycle—that is, the output of PQC $\tilde{U}_{t+1}(\bm{\theta}_{t+1})$—and processes each state $|\tilde{\psi}_{t,i}\rangle$ in the input state set $\tilde{\mathcal{S}}_{t}$ via PQC $\tilde{U}_{t}(\bm{\theta}_{t})$ to obtain the output state set $\tilde{\mathcal{S}}_{t-1}$. By computing and minimizing the distance between $\tilde{\mathcal{S}}_{t-1}$ and the intermediate state set $\mathcal{S}_{t-1}$ generated during the forward process, the parameters of $\tilde{U}_{t}(\bm{\theta}_{t})$ are optimized so that the states processed by $\tilde{U}_{t}(\bm{\theta}{t})$ conform as closely as possible to the intermediate target distribution corresponding to $\mathcal{S}_{t-1}$. Repeating this procedure T times, the state set produced by the output of $\tilde{U}_{1}(\bm{\theta}_{1})$ in the final training cycle constitutes the set that conforms to the target distribution. Once training is completed, a Haar-random state can be fed into the backward process to generate states that follow the target distribution. 

\subsection{PQC Ansatz and Variational Training Protocol} 
To facilitate the subsequent analysis of the BP problem, we present here the general structure of the PQC: 

\begin{equation}  \label{eq:U_t}
    \tilde{U}_t(\bm \theta_t)=\prod\limits_{l=1}^{L}W_t V_t(\bm \theta_{t,l}),
\end{equation}
where $W_t$ is a generic unitary operator that does not depend on any angle $\theta$, $V(\bm \theta_{t,l})={} \otimes_{\lambda=0}^{n-1} \prod\limits_{i=0}^{\tau-1}R_{\sigma^{(i)}}(\theta_{t,l,\lambda\tau+i})$, $\sigma^{(i)} \in \{\sigma_1, \sigma_2, \sigma_3\}$, $n$ is the total number of qubits.

Additionally, the loss function used by the model is given by  

\begin{equation} \label{eq:loss}
    \begin{split}
        \mathcal{L}_t(\bm \theta_t) &=\mathcal{D}_{\rm MMD}(\tilde{\mathcal{S}}_{t-1}, \mathcal{S}^{'}_{t-1}) \\ &=\mathcal{D}_{\rm MMD}(\tilde{U}_t(\bm \theta_t)\tilde{\mathcal{S}}_t, \mathcal{S}^{'}_{t-1}),
    \end{split}
\end{equation}
where $\mathcal{D}_{\rm MMD}(\mathcal{S}_1, \mathcal{S}_2)=\bar{F}(\mathcal{S}_1, \mathcal{S}_1)+\bar{F}(\mathcal{S}_2, \mathcal{S}_2)-2\bar{F}(\mathcal{S}_1, \\ \mathcal{S}_2)$, $\bar{F}(\mathcal{S}_1, \mathcal{S}_2)=\mathbb{E}_{|\phi\rangle \in \mathcal{S}_1, |\psi\rangle\in\mathcal{S}_2}[|\langle\phi|\psi\rangle|^2]$, both $\mathcal{S}_1$ and $\mathcal{S}_2$ is quantum state ensembles. Notably, since the generated data $\tilde{\mathcal{S}}_{t-1}$ may be smaller than the real data $\mathcal{S}_{t-1}$, the distance calculated between them is typically the distance between $\tilde{\mathcal{S}}_{t-1}$ and a subset $\mathcal{S}^{'}_{t-1}$ of $\mathcal{S}_{t-1}$, which has the same size as  $\tilde{\mathcal{S}}_{t-1}$.
\begin{figure*}[t]
    \vskip 0.2in
    \begin{center}
    \centerline{\includegraphics[width=1.8\columnwidth]{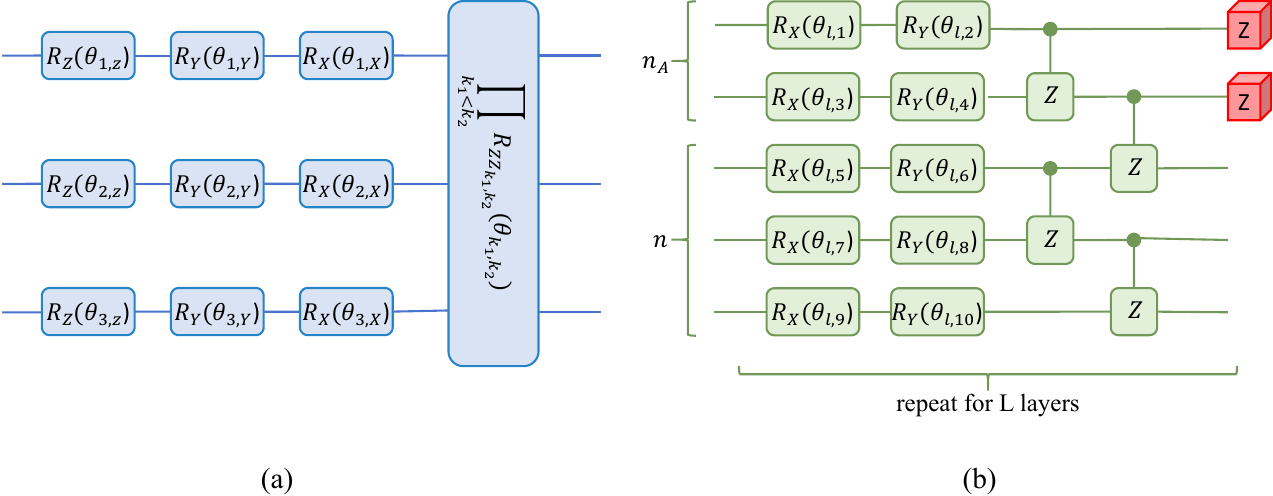}}
    \caption{ Quantum circuit structure. (a) is the circuit of one step of the forward diffusion process on a system of $n=3$ qubits. (b) is one-layer architecture of L-layer of $U_t(\bm \theta_t)$ on a system of $n=3$ data and $n_A=2$ ancilla qubits. }
    \label{fig:circuit}
    \end{center}
    \vskip -0.2in
\end{figure*}

\section{THE barren plateau phenomenon in QuDDPM}

As described earlier, the BP is a phenomenon where the gradient decays exponentially with the number of qubits, making it difficult for the ML model to handle large-scale problems. \textbf{Existing studies show that BPs are mathematically defined by a zero mean gradient and a variance decaying exponentially with the number of qubits \cite{0Barren}}. We note that the statistical properties of Haar-random states seem to induce the above behavior in the model. Therefore, we conduct the following theoretical analysis on the potential influence of Haar-random states on the gradient of QuDDPM.

For the analysis of the BP, we focus on the loss function Eq.~\ref{eq:loss}. The Eq.~\ref{eq:loss} can be viewed as a function of the unitary operator $\tilde{U}_t(\bm \theta_t)$, so we need to first compute the derivative of $\tilde{U}_t(\bm \theta_t)$ with respect to $\bm \theta_t$ before calculating the gradients of the loss. Therefore, we calculate the gradient of the unitary operator $\tilde{U}_t(\bm \theta_t)$ with respect to the parameter $\theta_{t,l,k}$ as follows:

\begin{equation} \label{eq:U_partial_derivative}
    \begin{split}
        \partial_{l,k}\tilde{U}_t({\bm \theta_t})=-\frac{i}{2}\tilde{U}_{t,L:l}K\tilde{U}_{t,l-1:1},
    \end{split}
\end{equation}
where $\alpha=k\mod \tau$, $n$ is the total number of qubits, $K=(\otimes_{\lambda=0}^{n-1}\prod\limits_{i=\alpha+1}^{\tau-1}R_{\sigma^{(i)}}(\theta_{t,l,\lambda\tau+i}))^\dagger[(I^{0:\lfloor k/\tau\rfloor}\otimes \sigma^{(\alpha)} \otimes \\ I^{\lfloor k/\tau+1\rfloor:n})(\otimes_{\lambda=0}^{n-1}\prod\limits_{i=\alpha+1}^{\tau-1}R_{\sigma^{(i)}}(\theta_{t,l,\lambda\tau+i}))$, $I^{i:j}$ denotes that the operator $I$ acts on the qubits in the range $[i,j)$. Obviously, $K$ is a Hermitian operator. Furthermore, for $\tilde{U}_t({\bm \theta_t})$ with multiple layers, we define $\tilde{U}_{t,l_1:l_2}=\prod\limits_{l=l_2}^{l_1}W_t V_t(\bm \theta_{t,l})$. The proof of Eq.~\ref{eq:U_partial_derivative} is provided in Appendix \ref{ap:U_gradient}.

Based on the results of Eq.~\ref{eq:U_partial_derivative}, we formally compute the gradient of the loss function Eq.~\ref{eq:loss}. We note that $\mathcal{L}_t(\bm \theta_t) = \bar{F}(\mathcal{S}_{t-1}^{'},\mathcal{S}_{t-1}^{'}) + \bar{F}(\tilde{S}_{t-1},\tilde{S}_{t-1}) - 2\bar{F}(\tilde{S}_{t-1},S_{t-1}^{'})$. Therefore, to present the results more clearly, we compute the gradient of each of the three terms individually. Through straightforward calculation, it is not difficult to obtain $\partial_{l,k}\bar{F}(\mathcal{S}_{t-1}^{'}, \mathcal{S}_{t-1}^{'})=0$ and $\partial_{l,k}\bar{F}(\tilde{S}_{t-1},\tilde{S}_{t-1})=0$. It is evident that the gradient of the loss function Eq.~\ref{eq:loss} depends only on $\bar{F}(\tilde{S}_{t-1},S_{t-1}^{'})$. Consequently, we have
\begin{equation} \label{eq:loss_partial_derivative}
    \begin{split}
        \partial_{l,k}\mathcal{L}_t =&\frac{i}{|\mathcal{S}|^2}\sum\limits_{i,j}{\rm Tr}(|\psi_{t-1,j}\rangle\langle\psi_{t-1,j}|\cdot\tilde{U}_{t,L:l}^{'}[K, \\
        &\tilde{U}_{t,l-1:1}^{'}|\tilde{\psi}_{t,i}\rangle\langle\tilde{\psi}_{t,i}|\tilde{U}_{t,l-1:1}^{'\dagger}]\tilde{U}_{t,L:l}^{'\dagger}),
    \end{split}
\end{equation}
where $|\mathcal{S}|=|\mathcal{S}_{t-1}|=|\tilde{\mathcal{S}}_{t-1}|$, $|\psi_{t-1,j}\rangle\in\mathcal{S}_{t-1}$, $|\tilde{\psi}_{t,i}\rangle\in\tilde{\mathcal{S}}_t$, $\tilde{U}^{'}_{t}$ represents the part of $\tilde{U}_{t}$ applied to the data qubits. The proof is provided in Appendix \ref{ap:loss_gradient}.

As mentioned above, we suspect that the BP observed in QuDDPM arises from the statistical properties of Haar-random states, which cause the mean of the gradient to be zero and the variance to decay exponentially with the number of qubits. To prove this hypothesis, we redefine the gradient of loss function for the scenario where the input to the PQC is Haar random states, as follows:
\begin{equation} \label{eq:L_haar_p}
\begin{split}
    \partial_{l,k}\mathcal{L}_H=&\frac{i}{|\mathcal{S}|^2}\sum\limits_{i,j}{\rm Tr}(|\psi_{j}\rangle\langle\psi_{j}|\cdot\tilde{U}_{L:l}^{'}[K,\tilde{U}_{l-1:1}^{'} \cdot \\ & U_{Hi}|0\rangle\langle0|U_{Hi}^\dagger \cdot\tilde{U}_{l-1:1}^{'\dagger}]\tilde{U}_{L:l}^{'\dagger}),
\end{split}
\end{equation}
where $U_{Hi}\in\mathcal{U}_H(N)$ obeying Haar measure, $N=2^{n_{data}}$, $n_{data}$ is the number of data qubits; the gradient of loss function when the input to the PQC is approximate Haar random states as follows:
\begin{equation} \label{eq:L_ahaar_p}
\begin{split}
    \partial_{l,k}\mathcal{L}_A=& \frac{i}{|\mathcal{S}|^2}\sum\limits_{i,j}{\rm Tr}(|\psi_{j}\rangle\langle\psi_{j}|\cdot\tilde{U}_{L:l}^{'}[K,\tilde{U}_{l-1:1}^{'}\cdot \\ & U_{Ai}|0\rangle\langle0|U_{Ai}^\dagger \cdot\tilde{U}_{l-1:1}^{'\dagger}]\tilde{U}_{L:l}^{'\dagger}),
\end{split}
\end{equation}
where $U_{Ai}\in\mathcal{U}_A(N)$ obeying approximate Haar measure. 

To further proceed the analysis of BPs phenomenon in QuDDPM, we need Lemma~\ref{lem:1} and Lemma~\ref{lem:2} as follows. For the proofs of the two lemmas, please refer to the Appendix \ref{ap:lem1} and \ref{ap:lem2}.

\begin{lem} \label{lem:1}
    Assuming A, B, C, U and V are all N-dimensional unitary matrices, then
    \begin{equation}
        \begin{split}
            &{\rm Tr}(AB[V,CU|0\rangle \langle 0|U^\dagger C^\dagger]B^\dagger) \\
            =&\sum\limits_{b,b',e,e'<N} D_{ib} U_{b1} U_{1e}^\dagger E_{ei} - F_{jb' }U_{b'1} U_{1e'}^\dagger G_{e'j},
        \end{split}
    \end{equation}
    where$D=ABVC$, $E=C^\dagger B^\dagger$, $F=ABC$, $G=C^\dagger V B^\dagger$.
\end{lem}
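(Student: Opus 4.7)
The plan is to expand the commutator inside the trace and then translate each resulting trace into an explicit index sum that isolates the $U$-dependence into two ``column'' entries $U_{\cdot 1}$ and two ``row'' entries $U^\dagger_{1 \cdot}$.

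First, I would expand
\begin{equation*}
AB\bigl[V,\,CU|0\rangle\langle 0|U^\dagger C^\dagger\bigr]B^\dagger = ABVCU|0\rangle\langle 0|U^\dagger C^\dagger B^\dagger - ABCU|0\rangle\langle 0|U^\dagger C^\dagger V B^\dagger.
\end{equation*}
Taking the trace of both sides and using its cyclic invariance, and substituting the abbreviations $D=ABVC$, $E=C^\dagger B^\dagger$, $F=ABC$, $G=C^\dagger V B^\dagger$ from the statement, the LHS of the lemma collapses to
\begin{equation*}
\Tr(D\,U|0\rangle\langle 0|U^\dagger E) \;-\; \Tr(F\,U|0\rangle\langle 0|U^\dagger G).
\end{equation*}

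Next, I would use the observation that $U|0\rangle$ is simply the first column of $U$, so its $b$-th entry is $U_{b1}$, while $(\langle 0|U^\dagger)_e = U^\dagger_{1e}$. Hence the rank-one matrix $U|0\rangle\langle 0|U^\dagger$ has entries $U_{b1} U^\dagger_{1e}$, and writing each trace as a sum over a closed chain of indices yields
\begin{equation*}
\Tr(D\,U|0\rangle\langle 0|U^\dagger E) = \sum_{i,b,e} D_{ib}\,U_{b1}\,U^\dagger_{1e}\,E_{ei},
\end{equation*}
and an analogous identity with $(F,G)$ and summation labels $(j,b',e')$. Subtracting the two sums reproduces the right-hand side of the lemma, understanding that the outer indices $i$ and $j$ are summed alongside $b,b',e,e'$.

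The sole ``obstacle'' is bookkeeping: one must keep the row/column orientation of $D_{ib}$ consistent with the matrix product $ABVC$, and similarly for $E,F,G$, so that the index chain in each trace closes correctly and the $U_{b1}\,U^\dagger_{1e}$ pair sits between exactly the same two matrices as the original $|0\rangle\langle 0|$. There is no substantive analytic content; the lemma is purely a rewriting whose value is that the RHS is a sum of monomials linear in two entries of $U$ and two entries of $U^\dagger$, matching precisely the form to which the Haar second-moment formula in Eq.~\ref{eq:unitary_4} can be applied term by term when the product of two such traces is integrated in the variance computation of Eq.~\ref{eq:gradient_var}.
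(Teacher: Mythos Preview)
Your proposal is correct and follows essentially the same route as the paper's proof: expand the commutator, recognize $U|0\rangle\langle 0|U^\dagger$ has entries $U_{b1}U^\dagger_{1e}$, and write each trace as an index sum with the abbreviations $D,E,F,G$. The only cosmetic difference is that the paper introduces the $D,E,F,G$ substitutions after first writing the full index expansion with $(|0\rangle\langle 0|)_{cd}$ explicit, whereas you substitute them first; both arrive at the same expression by the same elementary bookkeeping.
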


\begin{lem} \label{lem:2}
    Assuming A, B, C, D and U are all N-dimensional unitary matrices, and $U \in \mathcal{U}(N)$ match the Haar distribution. Then, we have
    \begin{equation}
        \begin{split}
            &\mathbb{E}_{U \in \mathcal{U}(N)}(\sum A_{ae}U_{e1}U_{1f}^\dagger B_{fa}C_{bg}U_{g1}U_{1h}^\dagger D_{hb}) \\
            &\in [-\frac{2}{N^2-1}, \frac{2}{N^2-1}].
        \end{split}
    \end{equation}
\end{lem}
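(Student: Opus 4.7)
The plan is to exploit the structural fact that every $U$-factor in the integrand has one index pinned to $1$: $U_{e1}$, $U_{1f}^\dagger$, $U_{g1}$, $U_{1h}^\dagger$. Introducing the Haar-random unit vector $|v\rangle := U|1\rangle$ with components $v_e = U_{e1}$ and $\bar v_f = U_{1f}^\dagger$, the inner summations over $\{a,e,f\}$ and $\{b,g,h\}$ collapse independently into bilinear forms,
\begin{equation}
\sum_{a,e,f} A_{ae}\, v_e\, \bar v_f\, B_{fa} = \langle v|BA|v\rangle, \qquad \sum_{b,g,h} C_{bg}\, v_g\, \bar v_h\, D_{hb} = \langle v|DC|v\rangle,
\end{equation}
so the quantity of interest becomes $\mathbb{E}_U\bigl[\langle v|BA|v\rangle\,\langle v|DC|v\rangle\bigr]$.

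Next, I would specialize Eq.~\ref{eq:unitary_4} to $j_1 = j_2 = j'_1 = j'_2 = 1$, which makes every $j$-Kronecker delta equal to $1$. The surviving coefficient $\bigl(\tfrac{1}{N^2-1} - \tfrac{1}{N(N^2-1)}\bigr) = \tfrac{1}{N(N+1)}$ multiplies $(\delta_{ef}\delta_{gh} + \delta_{eh}\delta_{gf})$, giving the fourth moment of a Haar column,
\begin{equation}
\mathbb{E}_U\bigl[ v_e\, v_g\, \bar v_f\, \bar v_h \bigr] = \frac{\delta_{ef}\delta_{gh} + \delta_{eh}\delta_{gf}}{N(N+1)}.
\end{equation}
Contracting this against $A_{ae}B_{fa}C_{bg}D_{hb}$ and summing yields the closed form
\begin{equation}
\mathbb{E}_U\bigl[\langle v|BA|v\rangle\,\langle v|DC|v\rangle\bigr] = \frac{\Tr(BA)\,\Tr(DC) + \Tr(BADC)}{N(N+1)}.
\end{equation}

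The final step is to bound both numerator terms. Using $|\Tr(M)| \leq N$ for any $N$-dimensional unitary $M$ (since $\|M\|_{\text{op}}=1$), the contribution $\Tr(BADC)/(N(N+1))$ is immediately at most $1/(N+1)$, which is already of the order of the target $2/(N^2-1)$. The main obstacle is the product $\Tr(BA)\,\Tr(DC)$: for generic unitaries it can reach $N^2$, so its contribution is $O(1)$ rather than $O(1/N^2)$; indeed taking $A=B=C=D=I$ produces expectation $1$, already violating the claimed bound. To force the stated interval $[-2/(N^2-1),\,2/(N^2-1)]$ I would need to invoke extra structure beyond plain unitarity. In the only context where this lemma is deployed, the matrices $A,B,C,D$ arise from the expansion in Lemma~1, so either $BA$ or $DC$ actually contains the traceless Hermitian generator $K$ (a Pauli string tensored with identities). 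The plan is therefore to use cyclicity of trace to surface that $K$ explicitly inside $\Tr(BA)$ and $\Tr(DC)$, exploit $\Tr(K)=0$ to kill the leading term, and then combine the resulting $O(1)$ bound on $|\Tr(BA)\,\Tr(DC)|$ with the $|\Tr(BADC)|\leq N$ bound to close out $2/(N^2-1)$. I expect the cleanest rigorous formulation will require augmenting Lemma~2's hypotheses with this traceless-generator condition inherited from Lemma~1.
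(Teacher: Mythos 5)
Your exact evaluation of the Haar average is correct and exposes a genuine error in the paper. Writing $v$ for the first column of $U$, the fourth-moment contraction gives
\begin{equation*}
\mathbb{E}_U\bigl[\langle v|BA|v\rangle\,\langle v|DC|v\rangle\bigr]=\frac{\Tr(BA)\Tr(DC)+\Tr(BADC)}{N(N+1)},
\end{equation*}
exactly as you derive, and your counterexample $A=B=C=D=I$ (where the integrand equals $\bigl(\sum_a|U_{a1}|^2\bigr)^2=1$ deterministically) shows the lemma is false as stated: no $O(1/N^2)$ bound can hold under unitarity alone. The paper's own proof contracts the Weingarten deltas incorrectly: it arrives at $\frac{2}{N^2-1}(BA)_{11}(DC)_{11}$, i.e.\ it pins the summed indices $e,f,g,h$ to $1$ rather than identifying them pairwise, and then bounds the $(1,1)$ entries of unitary matrices by $1$. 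The correct contraction produces the traces you found, and $\Tr(BA)\Tr(DC)$ can be of order $N^2$ for generic unitaries. So your diagnosis of where the argument breaks is precisely right, and is sharper than the paper's treatment.

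Two corrections to your proposed repair, however. First, in the only place the lemma is invoked, $A_i=\rho_{\psi_i}\tilde U'_{L:l}K\tilde U'_{l-1:1}$ and $C_j=\rho_{\psi_j}\tilde U'$ contain rank-one projectors and are therefore not unitary, so the hypotheses must be rewritten in any case. Second, tracelessness of $K$ does not kill the leading term: cycling the trace gives $\Tr(BA_i)=\Tr\bigl(\rho_{\psi_i}\,\tilde U'_{L:l}K\tilde U'^{\dagger}_{L:l}\bigr)=\langle\psi_i|\tilde U'_{L:l}K\tilde U'^{\dagger}_{L:l}|\psi_i\rangle$, the expectation value of a traceless observable in a pure state, which is generically nonzero. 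What actually rescues the bound is the projector itself: $|\Tr(\rho M)|\le\|M\|_{\rm op}$ yields $|\Tr(BA_i)|\le 1$, $|\Tr(DC_j)|\le 1$, and $|\Tr(BA_iDC_j)|=|\langle\psi_i|\psi_j\rangle|^2\le 1$ (using $K^2=I$), whence the expectation is bounded by $2/(N(N+1))\le 2/(N^2-1)$. Relatedly, your remark that the $\Tr(BADC)$ term is ``already of the order of the target'' under plain unitarity is off by a factor of $N$: the bound $|\Tr(BADC)|\le N$ only gives $1/(N+1)$, so that term also needs the projector structure. The honest restatement of the lemma should assume that $A$ and $C$ each carry a factor of a rank-one density operator with the remaining factors of operator norm at most one, rather than assuming all four matrices are unitary; with that hypothesis your exact formula immediately closes the proof.
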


After giving the above two lemmas, we present the core theorem of this paper: 

\begin{thm} \label{thm:bp}
For Eq.~\ref{eq:L_haar_p} and Eq.~\ref{eq:L_ahaar_p}, if $U_{Hi}\in\mathcal{U}_H(N)$ obeying Haar measure, $U_{Ai}\in\mathcal{U}_A(N)$ obeying approximate Haar measure, then:
\begin{gather}
    \langle \partial_{l,k} \mathcal{L}_H \rangle = 0, \\
    |\mathrm{Var}(\partial_{l,k} \mathcal{L}_H)| \leq \frac{8}{|\mathcal{S}|^4 \cdot (2^{2n_{\text{data}}}-1)}, \\
    \langle \partial_{l,k} \mathcal{L}_A \rangle = \varepsilon, \\
    |\mathrm{Var}(\partial_{l,k} \mathcal{L}_A)| \leq \frac{8}{|\mathcal{S}|^4 \cdot (2^{2n_{\text{data}}}-1)} + 4\zeta.
\end{gather}
where $n_{data}$ is the number of data qubits, $\varepsilon, \zeta$ denote the deviations of approximate Haar random states from exact Haar random states at different angles.
\end{thm}

\begin{proofsketch}
The proof of the Theorem \ref{thm:bp} consists of two core parts: the statistical analysis under the exact Haar measure and the approximate Haar measure case. The key idea is to leverage the properties of the unitary group matching the Haar distribution to convert complex quantum operator integrals into algebraic operations, thereby quantifying the distribution characteristics of the loss function gradient in Hilbert space.

\textbf{Exact Haar measure (Eq. \ref{eq:L_haar_p})}: 
\begin{itemize}
    \item \textbf{Mean:} By leveraging the 1-design property of the Haar measure \cite{theory_unitart_group},
        \begin{equation} \label{eq:unitary_2}
           \int_{\mathcal{U}_H(N)}u_{ij}\bar{u}_{i'j'}dU_H=\frac{1}{N}\delta_{ii'}\delta(jj'),
        \end{equation}
        where $u_{ij}$ and $u_{i'j'}$ are the element of unitary operator $U_H$, the expression for the expectation value is transformed into $\langle\partial_{l,k}\mathcal{L}_H\rangle=\frac{i}{|\mathcal{S}|^2}\sum\limits_{i,j}{\rm Tr}(|\psi_{j}\rangle\langle\psi_{j}|\cdot\tilde{U}_{L:l}^{'}[K,\frac{I}{2^{n_{data}}}]\tilde{U}_{L:l}^{'\dagger})$. Due to the presence of the commutator $[K,\frac{I}{2^{n_{data}}}]$, it follows that $\langle\partial_{l,k}\mathcal{L}_H\rangle=0$.

    \item \textbf{Variance:} By leveraging the 2-design properties of the unitary group \cite{theory_unitart_group},
        \begin{equation} \label{eq:unitary_4}
            \begin{split}
                &\int_{\mathcal{U}_H(N)} u_{i_1j_1}u_{i_2j_2} \bar{u}_{i'_1j'_1}\bar{u}_{i'_2j'_2}d\mu(U_H) \\
                =&\frac{\delta_{i_1i'_1} \delta_{i_2i'_2} \delta_{j_1j'_1} \delta_{j_2j'_2}+
                \delta_{i_1i'_2} \delta_{i_2i'_1} \delta_{j_1j'_2} \delta_{j_2j'_1}}{N^2-1} -\\
                &\frac{\delta_{i_1i'_1} \delta_{i_2i'_2} \delta_{j_1j'_2} \delta_{j_2j'_1}+
                \delta_{i_1i'_2} \delta_{i_2i'_1} \delta_{j_1j'_1} \delta_{j_2j'_2}}{N(N^2-1)},
            \end{split}
        \end{equation}
        we evaluate the integral appearing in the gradient variance. Subsequently, by performing trace expansions and applying Lemma \ref{lem:1} and \ref{lem:2} to simplify high-order terms, we obtain the bound $|\mathrm{Var}(\partial_{l,k} \mathcal{L}_H)| \leq \frac{8}{|\mathcal{S}|^4 \cdot (2^{2n_{\text{data}}}-1)}$.
\end{itemize}

\textbf{Approximate Haar measure (Eq. \ref{eq:L_ahaar_p})}: The analysis for the approximate Haar measure is almost identical to that for the exact Haar measure, except that a bias term $\varepsilon$ and $\zeta$ appear in the expectation and variance, respectively.

For a detailed proof, please refer to the Appendix \ref{ap:bp}.

\end{proofsketch}

As shown in Theorem~\ref{thm:bp}, it is clear that as the number of qubits increases, the variance of the gradient is upper bounded by the term that converges exponentially to zero as the number of qubits increases. Therefore, when the number of qubits increases, the gradient will converge exponentially to mean $0$. This indicates that when the number of qubits is large, QuDDPM indeed suffers from the BP phenomenon, which arises due to the use of Haar random states as inputs to the PQC. Furthermore, as the earlier PQCs are not adequately trained, the subsequent PQCs will receive approximately Haar random states as input. Consequently, the entire training process will be hindered by the BP phenomenon, rendering the model untrainable. In this scenario, the impact of the BP is far more severe than that caused by other factors.

\begin{figure}[t]
    \vskip 0.2in
    \begin{center}
    \centerline{\includegraphics[width=0.7\columnwidth]{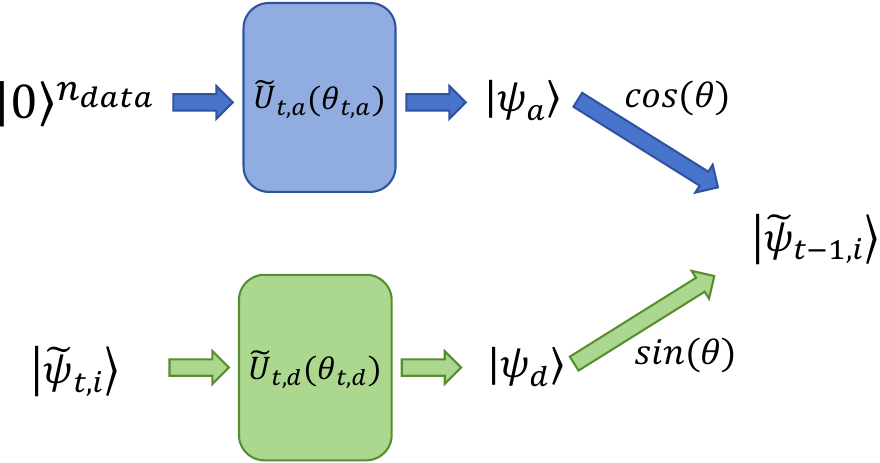}}
    \caption{The circuit diagram for a single training cycle within the backward process of our improved QuDDPM is illustrated, comprising two qubit systems and two independent PQCs, where $|\tilde{\psi}_{t-1,i}\rangle=\cos(\theta)|\psi_a\rangle+\sin(\theta)|\psi_d\rangle$.}
    \label{fig:improved_QuDDPM}
    \end{center}
    \vskip -0.2in
\end{figure}

\section{Improved and Conditional QuDDPM Architectures}

As mentioned previously, BPs induced by the Haar distribution are more detrimental to the model than those caused by other factors. This is because when Haar‑random states are used as input to a PQC, the presence of the BP forces the PQC’s output to remain—or closely approximate—a Haar‑random state, thereby creating a vicious cycle. To counteract this phenomenon, we propose an improved strategy: an additional auxiliary qubit system is introduced to perturb the output quantum state generated by the PQC acting on the data qubit system. These perturbations gradually steer the produced state away from the Haar distribution and toward the target quantum state.

\subsection{Improved QuDDPM} 
The core modification we have made to the original QuDDPM lies in the architecture of its backward process. As illustrated in Fig.\ref{fig:improved_QuDDPM}, in addition to the conventional data qubit system, we introduce an auxiliary qubit system initialized in the $|0\rangle$, which possesses the same number of qubits as the data qubit system. In the circuit design, two entirely independent PQCs are employed to process the auxiliary qubit system and the data qubit system separately. The output quantum states from the two PQCs are then entangled and superimposed to produce the final output state at the current timestep, expressed as 
\begin{equation}
    |\tilde{\psi}_{t-1,i}\rangle = \sin(\theta)\tilde{U}_{t,d}(\theta_{t,d})|\tilde{\psi}_{t,i}\rangle + \cos(\theta)\tilde{U}_{t,a}(\theta_{t,d})|0\rangle^{n_{data}}
\end{equation}

We note that when t is close to or equal to T, the states $|\tilde{\psi}_{t,i}\rangle$ approximately or exactly follow the Haar distribution. According to Theorem 3, introducing an auxiliary qubit system as in the original QuDDPM cannot alleviate the BP problem. In contrast, in our improved scheme, the auxiliary qubit system does not merely serve to add nonlinearity; rather, it acts as a guide and a symmetry breaker during training. That is, on one hand, it steers the generated states as much as possible toward the target state; on the other hand, the superposition of the quantum states from the auxiliary qubits with the data quantum states partially breaks the symmetry of the Haar-random states. The combination of these two effects enables the model to rapidly escape the BP, thereby achieving effective training.

\subsection{Conditional QuDDPM} 
We have refined our model specifically for the generation of Hamiltonian ground states. Specifically, the parameters for the PQC and state entanglement are obtained via 
\begin{equation} \label{eq:conditional_theta}
    \theta = \tanh(W \cdot x + b),
\end{equation}
where $x$ denotes the Hamiltonian parameters, and $W$ and $b$ are trainable weights and biases, respectively. It is not difficult to see that our approach of mapping the Hamiltonian parameters $x$ to the circuit parameters $\theta$ is analogous to that of classical neural networks. By leveraging the expressive power of classical neural networks, it is sufficient to obtain the desired circuit parameters $\theta$ from the Hamiltonian parameters $x$. Through this scheme, we obtain a conditional QuDDPM that can be trained to generate the corresponding ground state of a Hamiltonian based on the input parameters.

\section{Experiment}

In this section, we investigated the BP phenomenon and validated the effectiveness of conditional QuDDPM on two quantum systems: the one-dimensional antiferromagnetic Heisenberg model and the one-dimensional transverse-field Ising model.

\subsection{Experimental data}

\subsubsection{One-dimensional transverse-field Ising model} 
An $N$-qubit one-dimensional transverse-field Ising model is defined by the Hamiltonian
\begin{equation} \label{eq:tfim}
    H_{\text{tfim}} = -J \sum_{i=1}^{N-1} \sigma_z^{(i)} \sigma_z^{(i+1)} - h \sum_{i=1}^{N} \sigma_x^{(i)},
\end{equation}
where $J$ is the exchange coupling constant, $h$ is the magnetic field strength, and $\sigma_x^{(i)}$ ($\sigma_z^{(i+1)}$) denotes the Pauli‑X (Pauli‑Z) operator acting on the $i$-th qubit. Numerically, $J > 0$ and $h > 0$. According to the value of $J/h$, the ground state of the one-dimensional transverse-field Ising model corresponds to two distinct phases: it corresponds to a paramagnetic phase when $J/h < 1$ and a ferromagnetic phase when $J/h > 1$, with $J/h = 1$ being the phase transition point.

For the one‑dimensional transverse‑field Ising model, we collect two distinct datasets to serve different experimental purposes. The first dataset is used to investigate the BP problem in the original QuDDPM. We select the ground state at the phase transition point, i.e., $J/h = 1$, to ensure the complexity of the target state. The second dataset is employed to validate the effectiveness of the conditional QuDDPM. Therefore, we randomly sample multiple values of $J/h$ from the interval $[0.25, 4]$ to construct our training set.

\begin{figure*}[t]
    \vskip 0.2in
    \begin{center}
    \centerline{\includegraphics[width=1.8\columnwidth]{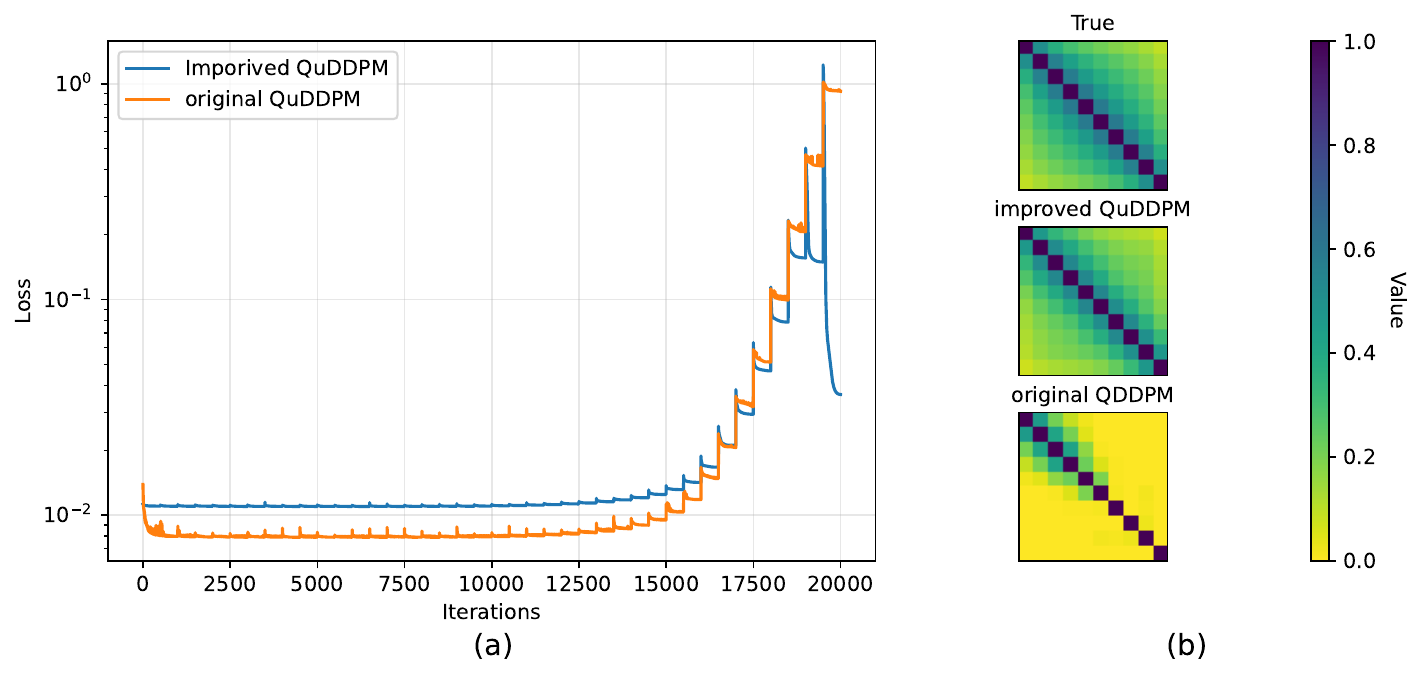}}
    \caption{The figure presents a comparison between the original QuDDPM and the improved QuDDPM. Subfigure (a) shows the evolution of the loss function during training, while Subfigure (b) displays the two-point correlation functions of states generated by different models in the testing phase. Here, the horizontal and vertical axes correspond to the qubit sites.}
    \label{fig:exp1_loss_and_cf}
    \end{center}
    \vskip -0.2in
\end{figure*}

\subsubsection{One‑dimensional antiferromagnetic Heisenberg model} 
An $N$-qubit one‑dimensional antiferromagnetic Heisenberg model is defined by the Hamiltonian
\begin{equation} \label{eq:heisenberg}
H_{\text{afhm}} = \sum_{i=1}^{N-1} J_i \left( \sigma_x^{(i)} \sigma_x^{(i+1)} + \sigma_y^{(i)} \sigma_y^{(i+1)} + \sigma_z^{(i)} \sigma_z^{(i+1)} \right).
\end{equation}
where $J_i$ denotes the coupling parameter, with $J_i > 0$ numerically.

For the one‑dimensional antiferromagnetic Heisenberg model, we primarily use it to validate the effectiveness of the conditional QuDDPM. Therefore, we take multiple sets of $J_i$ values from the interval $[0.5, 1.5]$ to form our training set.

\subsection{Hyper-parameters and Quantum state property}

\subsubsection{Hyper-parameters} 
As discussed above, our experiments aim to address two key objectives: (i) to empirically demonstrate the presence of BPs in the original QuDDPM and to validate that our improved QuDDPM effectively mitigates this issue; and (ii) to verify the efficacy of the conditional QuDDPM. To investigate the existence of BPs in the original QuDDPM and assess the performance of our improved variant, we employ a 10-qubit quantum system with a fixed PQC depth of 5 layers. For evaluating the conditional QuDDPM, which is designed to learn ground states corresponding to varying parameters of a given Hamiltonian, we adopt a smaller 5-qubit system. In this setting, the training dataset consists of 50 distinct ground states associated with the same Hamiltonian family. This choice strikes a balance between sample diversity and computational tractability, enabling efficient experimentation while preserving meaningful variation in the target states. The PQC depth is again fixed at 5 layers.

\subsubsection{Evalutaion metrics} 
To further validate the quality of our generated states, we introduce an important quantum state property—the two-point correlation function 
\begin{equation} \label{eq:cf}
    C_{ij} = \operatorname{tr}(O_{ij} \rho), 
\end{equation}
where $\rho$ is the density matrix of the quantum state, $i$ and $j$ denote two qubit sites in the system, and $O_{ij} = Z_i Z_j$.

\subsection{Experimental Results}

Here, we provide numerical results for the experimental setting described earlier, demonstrating the existence of BPs in the original QuDDPM and validating the effectiveness of our proposed improved QuDDPM and conditional QuDDPM.

\subsubsection{Barren plateaus and mitigation} 
We first examine whether BPs exist in the original QuDDPM and whether the improved QuDDPM can alleviate BPs. Fig.~\ref{fig:exp1_loss_and_cf} presents a comparison between the original QuDDPM and the improved QuDDPM with respect to the loss function during the training process and the quality of the generated data in the experiment. Subfigure (a) presents the evolution of the loss function for both models during training. It can be observed that in the early stage of training, the loss functions of both our improved QuDDPM and the original QuDDPM show no significant changes, but the loss value of the original QuDDPM is slightly smaller. This is because, in the early stage of training, the input and output states in each training epoch are approximately Haar-distributed, resulting in generally minor fluctuations in the loss function. Yet, precisely due to the presence of the ancilla qubit system, our improved QuDDPM struggles to fully approximate the Haar distribution, leading to a slightly larger loss value in the early stage—but this very characteristic is what enables it to rapidly escape the BP later on. In the later stages of training, the original QuDDPM fails to effectively learn the target states due to the presence of BPs; consequently, accumulated errors increase progressively, and the generated states deviate increasingly from the target states. Meanwhile, although our improved QuDDPM also encounters approximate BPs during the middle and late stages of training, it rapidly escapes from these plateaus and learns to generate states of substantially superior quality compared to those produced by the original QuDDPM. Subfigure (b) displays the two‑point correlation functions of the states generated by the two models during testing. From top to bottom, the first panel corresponds to the two‑point correlation function of the true state, the second panel shows the output of our improved QuDDPM, and the third panel presents the result from the original QuDDPM. Evidently, the original QuDDPM fails to achieve satisfactory generative performance, whereas our improved QuDDPM effectively learns and generates the corresponding state.

\begin{figure}[tb]
    \vskip 0.2in
    \begin{center}
    \centerline{\includegraphics[width=\columnwidth]{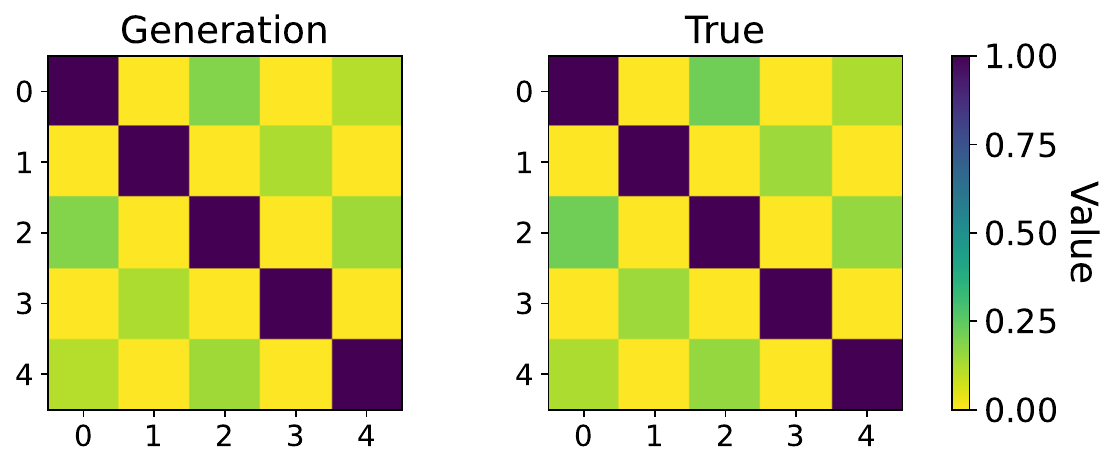}}
    \caption{For the one-dimensional antiferromagnetic Heisenberg model, comparison of the two-point correlation functions of quantum states generated by conditional QuDDPM with the true values.}
    \label{fig:afhm_cf}
    \end{center}
    \vskip -0.2in
\end{figure}

\subsubsection{Performance of conditional QuDDPM} 
Fig.~\ref{fig:afhm_cf} presents the experimental results of the conditional QuDDPM learning and generating the ground state of the one‑dimensional antiferromagnetic Heisenberg model, where the left panel displays the two‑point correlation function of the state generated by the conditional QuDDPM and the right panel shows the corresponding ground‑truth values. The experimental results clearly indicate that the states produced by the conditional QuDDPM can capture the characteristics of the true state, providing preliminary validation of the effectiveness of our conditional QuDDPM. 

We further validate the effectiveness of our conditional QuDDPM by learning the ground states of the one‑dimensional transverse‑field Ising model. As shown in Fig.~\ref{fig:tfim_cf}, the two‑point correlation functions of the quantum states generated by the conditional QuDDPM in the four regions are almost identical to the truth values, indicating that our conditional QuDDPM can effectively generate the corresponding quantum states based on the input conditions. Fig.~\ref{fig:tfim_sc} displays the distribution of 5,000 generated states projected onto a two‑dimensional plane via t-distributed stochastic neighbor embedding(t-SNE) \cite{t-sne}. It can be clearly observed that the generated states are distinctly separated into two clusters, each corresponding exactly to one of the two phases of the one‑dimensional transverse‑field Ising model. This further demonstrates that our generated states have successfully captured the essential properties of the quantum states associated with the Hamiltonian.

\begin{figure}[t]
    \vskip 0.2in
    \begin{center}
    \centerline{\includegraphics[width=1\columnwidth]{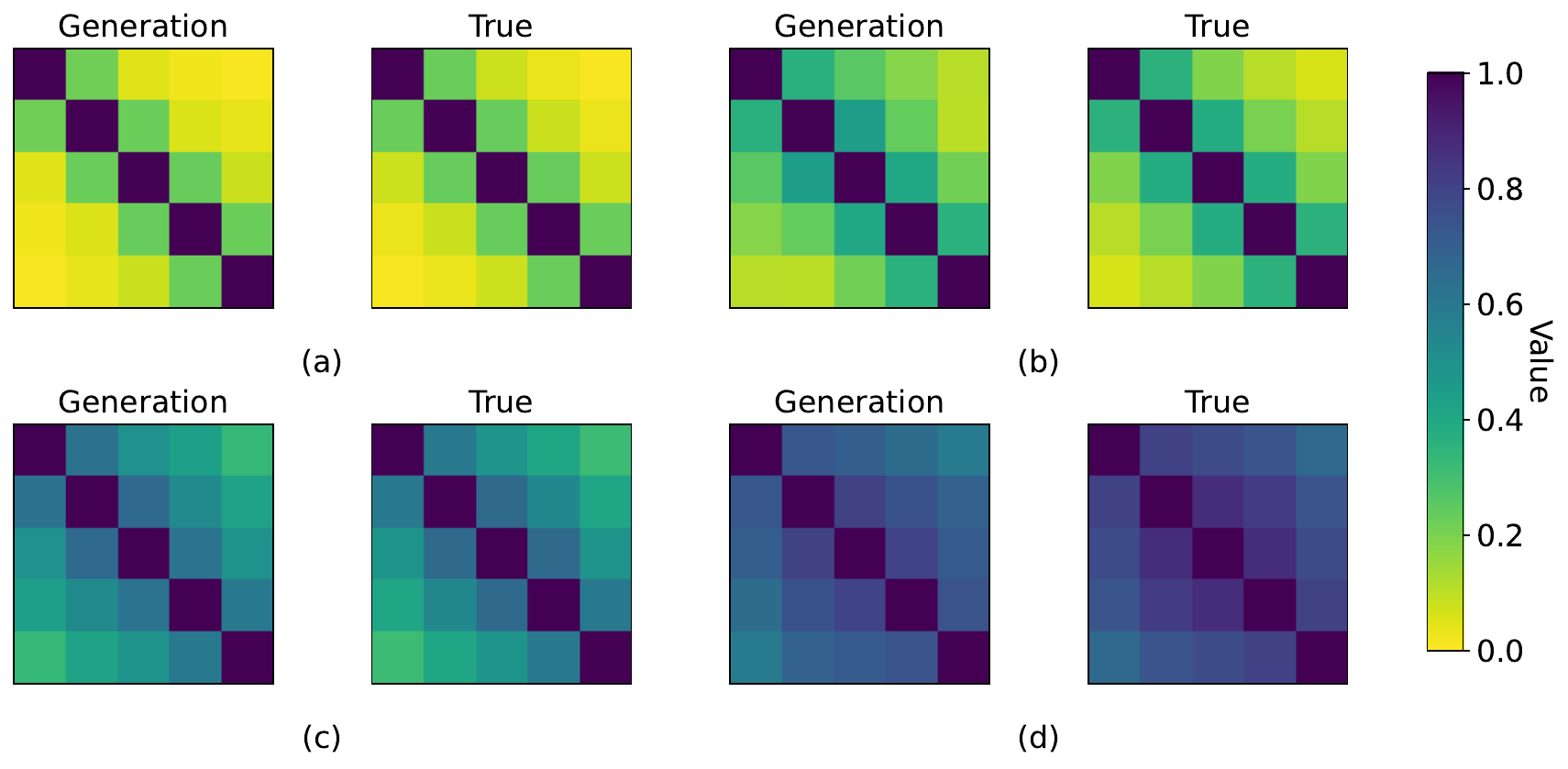}}
    \caption{The figure illustrates the generative performance of our proposed conditional QuDDPM for four key regions of the one‑dimensional transverse‑field Ising model, where (a) corresponds to $J/h \in (0, 0.5)$, (b) to $J/h \in (0.5, 1)$, (c) to $J/h \in (1, 1.5)$, and (d) to $J/h \in (1.5, 2)$. In each subfigure, the left figure displays the results computed from the state generated by our conditional QuDDPM, while the right figure presents the corresponding truth.}
    \label{fig:tfim_cf}
    \end{center}
    \vskip -0.2in
\end{figure}

\begin{figure}[t]
    \vskip 0.2in
    \begin{center}
    \centerline{\includegraphics[width=0.73\columnwidth]{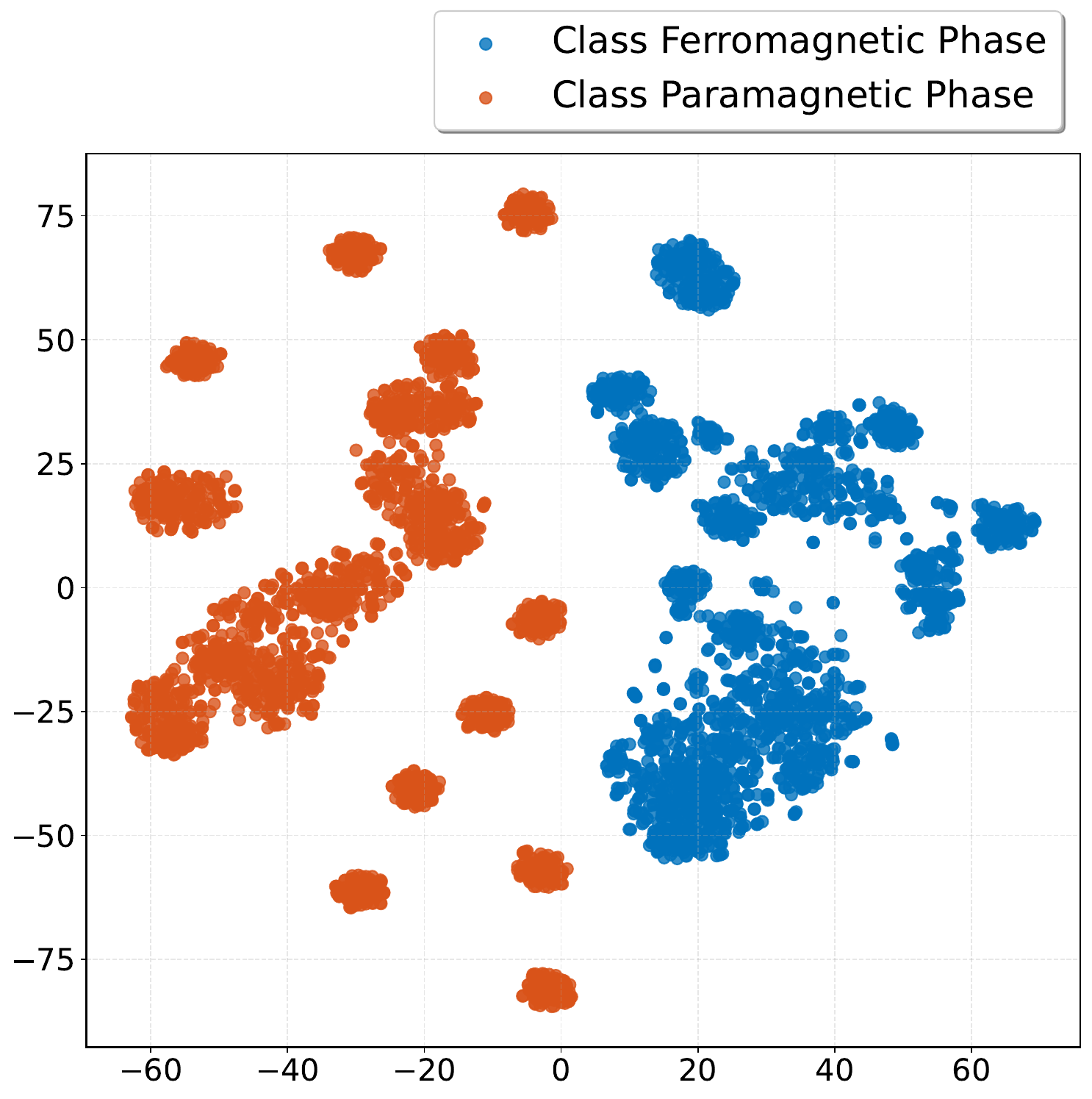}}
    \caption{The figure presents the mapping of states generated by the conditional QuDDPM onto a two‑dimensional plane via the t‑SNE method, where each state is annotated with the truth label computed from the corresponding Hamiltonian parameters.}
    \label{fig:tfim_sc}
    \end{center}
    \vskip -0.2in
\end{figure}

\section{Conclusion and Discussion}

In this work, we first identify the existence of BPs in QuDDPM and pinpoint the underlying cause: the initial input to the backward denoising process of QuDDPM consists of a set of Haar-random states. We further elaborate on the severe consequences induced by this phenomenon; specifically, due to the BP, the output of Haar-random states after passing through a PQC remains Haar-random or approximately Haar-random, trapping QuDDPM in the BP throughout the entire training process. To address this, we improve upon the original QuDDPM to obtain a novel quantum diffusion probabilistic model that enables the model to escape from BPs. However, it is worth noting that superposing two quantum states may currently be a challenging task, yet it represents an intriguing research direction worth exploring in the future. Moreover, we further refine our model into a conditional QuDDPM, enabling it to generate the corresponding ground state based on the input Hamiltonian parameters. This significantly expands the application scope of quantum diffusion models and advances the development of quantum diffusion models. Finally, it is regrettable that, due to equipment limitations, we were unable to conduct experiments on a larger scale, which may render our experimental results somewhat insufficient. Nevertheless, at least it can be concluded that our proposed improvement does mitigate the barren plateau to a certain extent.

\bibliography{references.bib}

@book{NL00 ,
  author    = {Michael A. Nielsen and Isaac L. Chuang},
  title     = {Quantum computation and quantum information},
  publisher = {Cambridge University Press},
  year      = {2000}
}

@article{ZhaofengQIP2018,
 	author   =    {Zhaofeng Su and Lvzhou Li and Jie Ling},
 	title    =    {An approach for quantitatively analyzing the genuine tripartite nonlocality of general three-qubit states},
 	journal  =    {Quantum Information Processing},
 	year     =    {2018},
 	month    =    {February },
	volume   =    {17},
	issue    =    {4},
	number   =    {85},
	pages    =    {},
	DOI      =    {doi.org/10.1007/s11128-018-1852-7},
	url      =    {https://link.springer.com/article/10.1007/s11128-018-1852-7}
}

@article{ZhaofengPRA2020 ,
  author   			=    {Zhaofeng Su and Haisheng Tan and Xiangyang Li},
  title    				=    {Entanglement as upper bound for the nonlocality of a general two-qubit system},
  journal			=    {Physical Review A},
  year				  =    {2018},
  month			   =    {February},
  volume		  =    {101},
  issue				 =    {4},
  number   =    {},
  pages    =    {042112},
  DOI      =    {10.1103/PhysRevA.101.042112},
  url      =    {https://journals.aps.org/pra/abstract/10.1103/PhysRevA.101.042112}
}

@conference{Shor94,
    author   =    {Peter Williston Shor},
    title    =   {Algorithms for quantum computation: discrete logarithms and factoring},
    booktitle  =  {Proceedings 35th Annual Symposium on Foundations of Computer Science},
    publisher =   {IEEE Computer Society Press},
    year     =    {1994},
    month    =    {November},
    pages    =    {124-134},
    address  =    {Santa Fe, New Mexico},
    doi     =   {0.1109/SFCS.1994.365700},
    url     =   {https://ieeexplore.ieee.org/abstract/document/365700}
}

@inproceedings{Grover96,
  author    =    {Lov K. Grover},
  title     =    {A fast quantum mechanical algorithm for database search},
  booktitle =    {Proceedings of the twenty-eighth annual ACM symposium on Theory of computing},
  series    =    {STOC '96},
  pages     =    {212-219},
  publisher =    {ACM},
  location  =    {Philadelphia, Pennsylvania, USA},
  year      =    {1996},
  doi       =    {10.1145/237814.237866},
  url       =    {https://dl.acm.org/doi/pdf/10.1145/237814.237866},
  address   =    {New York, NY, USA}
}

@article{ZhaofengPRA2018 ,
	author   =    {Zhaofeng Su and Ji Guan and Lvzhou Li},
	title    =    {Efficient quantum repeater with respect to both entanglement-concentration rate and complexity of local operations and classical communication},
	journal  =    {Physical Review A},
	year     =    {2018},
	month    =    {January},
	volume   =    {97},
	Issue    =    {1},
	number   =    {},
	pages    =    {012325},
	doi      =    {10.1103/PhysRevA.97.012325},
	url      =    {https://link.aps.org/doi/10.1103/PhysRevA.97.012325}
}

@article{Preskill2018,
  title = {Quantum {C}omputing in the {NISQ} era and beyond},
  author = {John Preskill},
  journal = {{Quantum}},
  issn = {2521-327X},
  publisher = {{Verein zur F{\"{o}}rderung des Open Access Publizierens in den Quantenwissenschaften}},
  volume = {2},
  pages = {79},
  month = {August},
  year = {2018},
  doi = {10.22331/q-2018-08-06-79},
  url = {https://doi.org/10.22331/q-2018-08-06-79}
}

@article{GoogleSycamore2019 , 
	title			=	{Quantum supremacy using a programmable superconducting processor},
	author			= 	{Frank Arute and Kunal Arya and Ryan Babbush and Dave Bacon and et. al.},
	journal			=	{Nature},
	volume			=	{574},
	issue			=	{},
	pages			=	{505–510},
	numpages	    = 	{6},
	year			=   {2019},
	month			=	{October},
	publisher		=	{Springer Nature},
	doi				=	{10.1038/s41586-019-1666-5},
	url				=	{https://www.nature.com/articles/s41586-019-1666-5}
}

@article{PanScience2020 ,
	title			=	{Quantum computational advantage using photons},
	author			= 	{Han-Sen Zhong and Hui Wang and Yu-Hao Deng and Ming-Cheng Chen and et. al.},
	journal			=	{Science},
	volume			=	{370},
	issue			=	{6523},
	pages			=	{1460-1463},
	numpages	    = 	{4},
	year			=   {2020},
	month			=	{December},
	doi				=	{10.1126/science.abe8770},
	url				=	{https://www.science.org/doi/10.1126/science.abe8770}
}

@article{Chi-JACM2021,
    title = {On Nonconvex Optimization for Machine Learning: Gradients, Stochasticity, and Saddle Points},
    author = {Jin, Chi and Netrapalli, Praneeth and Ge, Rong and Kakade, Sham M. and Jordan, Michael I.},
    journal = {Journal of the ACM},
    volume = {68},
    number = {2},
    year = {2021},
    month = {April},
    publisher = {Association for Computing Machinery},
    doi = {10.1145/3418526},
    url = {https://doi.org/10.1145/3418526},
}

@article{CML2mp1,
  title={Machine learning phases of matter},
  author={Carrasquilla, Juan and Melko, Roger G},
  journal={Nature Physics},
  volume={13},
  number={5},
  pages={431--434},
  year={2017},
  publisher={Nature Publishing Group UK London},
  doi = {10.1038/nphys4035},
  URL = {https://doi.org/10.1038/nphys4035},
}

@article{CML2mp2,
  title={Learning phase transitions by confusion},
  author={Van Nieuwenburg, Evert PL and Liu, Ye-Hua and Huber, Sebastian D},
  journal={Nature Physics},
  volume={13},
  number={5},
  pages={435--439},
  year={2017},
  publisher={Nature Publishing Group UK London},
  doi = {10.1038/nphys4037},
  URL = {https://doi.org/10.1038/nphys4037},
}

@inproceedings{CML2mp3,
  title={A classification technique for condensed matter phases using a combination of PCA and SVM},
  author={Badawi, WK and Osman, ZM and Sharkas, MA and Tamazin, M},
  booktitle={2017 Progress In Electromagnetics Research Symposium-Spring (PIERS)},
  pages={326--331},
  year={2017},
  organization={IEEE},
  doi = {10.1109/PIERS.2017.8261759},
  URL = {https://doi.org/10.1109/PIERS.2017.8261759},
}

@article{wf1,
  title={Solving the quantum many-body problem with artificial neural networks},
  author={Carleo, Giuseppe and Troyer, Matthias},
  journal={Science},
  volume={355},
  number={6325},
  pages={602--606},
  year={2017},
  publisher={American Association for the Advancement of Science},
  doi = {10.1126/science.aag2302},
  URL = {https://doi.org/10.1126/science.aag2302},
}

@article{QST1,
  title={Neural-network quantum state tomography},
  author={Torlai, Giacomo and Mazzola, Guglielmo and Carrasquilla, Juan and Troyer, Matthias and Melko, Roger and Carleo, Giuseppe},
  journal={Nature physics},
  volume={14},
  number={5},
  pages={447--450},
  year={2018},
  publisher={Nature Publishing Group UK London},
  doi = {10.1038/s41567-018-0048-5},
  URL = {https://doi.org/10.1038/s41567-018-0048-5},
}

@article{QST2,
  title={Learnability scaling of quantum states: Restricted Boltzmann machines},
  author={Sehayek, Dan and Golubeva, Anna and Albergo, Michael S and Kulchytskyy, Bohdan and Torlai, Giacomo and Melko, Roger G},
  journal={Physical Review B},
  volume={100},
  number={19},
  pages={195125},
  year={2019},
  publisher={APS},
  doi = {10.1103/PhysRevB.100.195125},
  URL = {https://doi.org/10.1103/PhysRevB.100.195125},
}

@article{huang2022provably,
  title={Provably efficient machine learning for quantum many-body problems},
  author={Huang, Hsin-Yuan and Kueng, Richard and Torlai, Giacomo and Albert, Victor V and Preskill, John},
  journal={Science},
  volume={377},
  number={6613},
  pages={eabk3333},
  year={2022},
  publisher={American Association for the Advancement of Science},
  doi = {10.1126/science.abk3333},
  URL = {https://doi.org/10.1126/science.abk3333},
}

@article{Wang_RPP2024,
    title = {A comprehensive review of quantum machine learning: from NISQ to fault tolerance},
    author = {Yunfei Wang and Junyu Liu},
    journal = {Reports on Progress in Physics},
    volume = {87},
    number = {11},
    year = {2024},
    pages = {116402},
    month = {October},
    publisher = {IOP Publishing},
    doi = {10.1088/1361-6633/ad7f69},
    url = {https://dx.doi.org/10.1088/1361-6633/ad7f69}
}

@article{Sweke2020,
    title = {Stochastic gradient descent for hybrid quantum-classical optimization},
    author = {Sweke, Ryan and Wilde, Frederik and Meyer, Johannes and Schuld, Maria and Faehrmann, Paul K. and Meynard-Piganeau, Barth{\'{e}}l{\'{e}}my and Eisert, Jens},
    journal = {{Quantum}},
    publisher = {{Verein zur F{\"{o}}rderung des Open Access Publizierens in den Quantenwissenschaften}},
    volume = {4},
    pages = {314},
    year = {2020},
    month = {August},
    doi = {10.22331/q-2020-08-31-314},
    url = {https://doi.org/10.22331/q-2020-08-31-314}
}

@article{Cerezo-NCS2022,
    title = {Challenges and opportunities in quantum machine learning},
    author = {M. Cerezo and Guillaume Verdon and Hsin-Yuan Huang and Lukasz Cincio  and Patrick J. Coles},
    journal = {Nature Computational Science},
    volume = {2},
    issue   = {9},
    pages = {567–576},
    year = {2022},
    month = {September},
    doi = {10.1038/s43588-022-00311-3},
    url = {https://www.nature.com/articles/s43588-022-00311-3}
}

@article{Patrick-QSVM2014,
    title = {Quantum Support Vector Machine for Big Data Classification},
    author = {Rebentrost, Patrick and Mohseni, Masoud and Lloyd, Seth},
    journal = {Physical Review Letters},
    volume = {113},
    issue = {13},
    pages = {130503},
    numpages = {5},
    year = {2014},
    month = {September},
    publisher = {American Physical Society},
    doi = {10.1103/PhysRevLett.113.130503},
    url = {https://link.aps.org/doi/10.1103/PhysRevLett.113.130503}
}

@article{Lloyd-NP2014,
    title = {Quantum principal component analysis},
    author = {Seth Lloyd and Masoud Mohseni and Patrick Rebentrost},
    journal = {Nature Physics},
    volume = {10},
    number = {808},
    pages   ={631–633},
    numpages = {3},
    year = {2014},
    month = {July},
    doi = {10.1038/nphys3029},
    url = {https://www.nature.com/articles/nphys3029}
}

@article{Kerstin-NC2020,
    title = {Training deep quantum neural networks},
    author = {Kerstin Beer and Dmytro Bondarenko and Terry Farrelly and Tobias J. Osborne and Robert Salzmann and Daniel Scheiermann  and Ramona Wolf },
    journal = {Nature Communications},
    volume = {11},
    number = {808},
    numpages = {6},
    year = {2020},
    month = {February},
    doi = {10.1038/s41467-020-14454-2},
    url = {https://www.nature.com/articles/s41467-020-14454-2}
}

@inproceedings{ji2024robustness,
    title={VeriQR: A Robustness Verification Tool for Quantum Machine Learning Models},
    author={Yanling Lin and Ji Guan and Wang Fang and Mingsheng Ying and Zhaofeng Su},
    booktitle={International Symposium on Formal Methods},
    pages={403--421},
    year={2024},
    month = {September},
    organization={Springer},
    doi = {10.1007/978-3-031-71162-6_21},
    url = {https://link.springer.com/chapter/10.1007/978-3-031-71162-6_21}
}

@article{cerezo2021variational,
    title = {Variational quantum algorithms},
    author = {Cerezo, Marco and Arrasmith, Andrew and Babbush, Ryan and Benjamin, Simon C and Endo, Suguru and Fujii, Keisuke and McClean, Jarrod R and Mitarai, Kosuke and Yuan, Xiao and Cincio, Lukasz and others},
    journal = {Nature Reviews Physics},
    volume = {3},
    number = {9},
    pages   =   {625--644},
    year = {2021},
    month = {August},
    doi = {10.1038/s42254-021-00348-9},
    url = {https://www.nature.com/articles/s42254-021-00348-9}
}

@article{Alberto-NC2014,
    title = {A variational eigenvalue solver on a photonic quantum processor},
    author = {Alberto Peruzzo and Jarrod McClean and Peter Shadbolt and Man-Hong Yung and Xiao-Qi Zhou and Peter J. Love and Al\'an Aspuru-Guzik and Jeremy L. O'Brien},
    journal = {Nature Communications},
    volume = {5},
    number = {4213},
    numpages = {3},
    year = {2014},
    month = {July},
    doi = {10.1038/ncomms5213},
    url = {https://www.nature.com/articles/ncomms5213}
}

@misc{farhi-QAOA2014,
    title={A Quantum Approximate Optimization Algorithm}, 
    author={Edward Farhi and Jeffrey Goldstone and Sam Gutmann},
    year={2014},
    eprint={1411.4028},
    archivePrefix={arXiv},
    primaryClass={quant-ph},
    doi =   {10.48550/arXiv.1411.4028},
    url={https://arxiv.org/abs/1411.4028}, 
}

@article{QuGAN,
  title = {Quantum generative adversarial networks},
  author = {Dallaire-Demers, Pierre-Luc and Killoran, Nathan},
  journal={Physical Review A},
  volume = {98},
  number={1},
  pages = {012324},
  year = {2018},
  month = {July},
  publisher = {American Physical Society},
  doi = {10.1103/PhysRevA.98.012324},
  url = {https://link.aps.org/doi/10.1103/PhysRevA.98.012324}
}

@article{L.M.Duan-GAN-2018,
    author = {X. Gao  and Z.-Y. Zhang  and L.-M. Duan },
    title = {A quantum machine learning algorithm based on generative models},
    journal = {Science Advances},
    volume = {4},
    number = {12},
    pages = {eaat9004},
    year = {2018},
    doi = {10.1126/sciadv.aat9004},
    URL = {https://www.science.org/doi/abs/10.1126/sciadv.aat9004}
}

@article{QVAE,
  title={Quantum variational autoencoder utilizing regularized mixed-state latent representations},
  author={Wang, Gaoyuan and Warrell, Jonathan and Emani, Prashant S and Gerstein, Mark},
  journal={Physical Review A},
  volume={111},
  number={4},
  pages={042416},
  year={2025},
  publisher={APS},
  publisher = {IOP Publishing},
  doi = {10.1103/PhysRevA.111.042416},
  url = {https://doi.org/10.1103/PhysRevA.111.042416}
}

@article{QCBM_1,
  title={A generative modeling approach for benchmarking and training shallow quantum circuits},
  author={Benedetti, Marcello and Garcia-Pintos, Delfina and Perdomo, Oscar and Leyton-Ortega, Vicente and Nam, Yunseong and Perdomo-Ortiz, Alejandro},
  journal={npj Quantum Information},
  volume={5},
  number={1},
  pages={45},
  year={2019},
  month={May},
  publisher={Nature Publishing Group UK London},
  doi={10.1038/s41534-019-0157-8},
  url={https://doi.org/10.1038/s41534-019-0157-8}
}

@article{QCBM_2,
  title={Differentiable learning of quantum circuit born machines},
  author={Liu, Jin-Guo and Wang, Lei},
  journal={Physical Review A},
  volume={98},
  number={6},
  pages={062324},
  year={2018},
  month = {December},
  publisher = {American Physical Society},
  doi = {10.1103/PhysRevA.98.062324},
  url = {https://link.aps.org/doi/10.1103/PhysRevA.98.062324}
}

@article{EQuGAN,
  title={Entangling quantum generative adversarial networks},
  author={Niu, Murphy Yuezhen and Zlokapa, Alexander and Broughton, Michael and Boixo, Sergio and Mohseni, Masoud and Smelyanskyi, Vadim and Neven, Hartmut},
  journal = {Physical Review Letters},
  volume={128},
  number={22},
  pages={220505},
  year={2022},
  month = {June},
  publisher = {American Physical Society},
  doi = {10.1103/PhysRevLett.128.220505},
  url = {https://link.aps.org/doi/10.1103/PhysRevLett.128.220505}
}

@article{QuDDPM,
    title = {Generative Quantum Machine Learning via Denoising Diffusion Probabilistic Models},
    author = {Zhang, Bingzhi and Xu, Peng and Chen, Xiaohui and Zhuang, Quntao},
    journal = {Physical Review Letters},
    volume = {132},
    number = {10},
    pages = {100602},
    year = {2024},
    month = {March},
    publisher = {American Physical Society},
    doi = {10.1103/PhysRevLett.132.100602},
    url = {https://link.aps.org/doi/10.1103/PhysRevLett.132.100602}
}

@article{0Barren,
  title={Barren plateaus in quantum neural network training landscapes},
  author={McClean, Jarrod R and Boixo, Sergio and Smelyanskiy, Vadim N and Babbush, Ryan and Neven, Hartmut},
  journal={Nature communications},
  volume={9},
  number={1},
  pages={4812},
  year={2018},
  month={November},
  publisher={Nature Publishing Group UK London},
  doi={10.1038/s41467-018-07090-4},
  url={https://doi.org/10.1038/s41467-018-07090-4}
}

@article{BP_Entanglement_1,
  title={Entanglement-induced barren plateaus},
  author={Ortiz Marrero, Carlos and Kieferov{\'a}, M{\'a}ria and Wiebe, Nathan},
  journal={PRX Quantum},
  volume={2},
  number={4},
  pages={040316},
  year={2021},
  month = {October},
  publisher = {American Physical Society},
  doi = {10.1103/PRXQuantum.2.040316},
  url = {https://link.aps.org/doi/10.1103/PRXQuantum.2.040316}
}

@article{BP_Entanglement_2,
  title={Entanglement devised barren plateau mitigation},
  author={Patti, Taylor L and Najafi, Khadijeh and Gao, Xun and Yelin, Susanne F},
  journal={Physical Review Research},
  volume={3},
  number={3},
  pages={033090},
  year={2021},
  month = {July},
  publisher = {American Physical Society},
  doi = {10.1103/PhysRevResearch.3.033090},
  url ={https://link.aps.org/doi/10.1103/PhysRevResearch.3.033090}
}

@article{BP_Noise,
  title={Noise-induced barren plateaus in variational quantum algorithms},
  author={Wang, Samson and Fontana, Enrico and Cerezo, Marco and Sharma, Kunal and Sone, Akira and Cincio, Lukasz and Coles, Patrick J},
  journal={Nature communications},
  volume={12},
  number={1},
  pages={6961},
  year={2021},
  month={November},
  publisher={Nature Publishing Group UK London},
  doi={10.1038/s41467-021-27045-6},
  url={https://doi.org/10.1038/s41467-021-27045-6}
}

@article{init_BP,
  title={An initialization strategy for addressing barren plateaus in parametrized quantum circuits},
  author={Grant, Edward and Wossnig, Leonard and Ostaszewski, Mateusz and Benedetti, Marcello},
  journal={Quantum},
  volume={3},
  pages={214},
  year={2019},
  month={December},
  publisher={Verein zur F{\"o}rderung des Open Access Publizierens in den Quantenwissenschaften},
  doi={10.22331/q-2019-12-09-214},
  url={https://doi.org/10.22331/q-2019-12-09-214}
}

@inproceedings{NEURIPS2022_7611a3cb,
 author = {Zhang, Kaining and Liu, Liu and Hsieh, Min-Hsiu and Tao, Dacheng},
 booktitle = {Advances in Neural Information Processing Systems},
 editor = {S. Koyejo and S. Mohamed and A. Agarwal and D. Belgrave and K. Cho and A. Oh},
 pages = {18612--18627},
 publisher = {Curran Associates, Inc.},
 title = {Escaping from the Barren Plateau via Gaussian Initializations in Deep Variational Quantum Circuits},
 url = {https://proceedings.neurips.cc/paper_files/paper/2022/file/7611a3cb5d733e628081431445cb01fd-Paper-Conference.pdf},
 volume = {35},
 year = {2022}
}

@article{ansatz_BP,
  title={Connecting ansatz expressibility to gradient magnitudes and barren plateaus},
  author={Holmes, Zo{\"e} and Sharma, Kunal and Cerezo, Marco and Coles, Patrick J},
  journal={PRX Quantum},
  volume={3},
  number={1},
  pages={010313},
  year={2022},
  month = {January},
  publisher = {American Physical Society},
  doi = {10.1103/PRXQuantum.3.010313},
  url = {https://link.aps.org/doi/10.1103/PRXQuantum.3.010313}
}

@article{theory_unitart_group,
  title={Symbolic integration with respect to the Haar measure on the unitary group},
  author={Pucha{\l}a, Zbigniew and Miszczak, Jaros{\l}aw Adam},
  journal={arXiv preprint arXiv:1109.4244},
  pages = {arXiv:1109.4244},
  year={2011},
  month = {September},
  doi={10.48550/arXiv.1109.4244},
  url={https://doi.org/10.48550/arXiv.1109.4244}
}

@article{t-sne,
    title = {t-Distributed Stochastic Neighbor Embedding (t-SNE): A tool for eco-physiological transcriptomic analysis},
    author = {Matthew C. Cieslak and Ann M. Castelfranco and Vittoria Roncalli and Petra H. Lenz and Daniel K. Hartline},
    journal = {Marine Genomics},
    volume = {51},
    pages = {100723},
    year = {2020},
    issn = {1874-7787},
    doi = {https://doi.org/10.1016/j.margen.2019.100723},
    url = {https://www.sciencedirect.com/science/article/pii/S1874778719301746}
}

\clearpage
\onecolumngrid
\appendix

\section{Partial derivatives of PQCs}
\label{ap:U_gradient}
For any $\tilde{U}_t({\bm \theta_t})$ that conforms to 
\begin{equation}  \label{eq:U_t}
    \tilde{U}_t(\bm \theta_t)=\prod\limits_{l=1}^{L}W_t V_t(\bm \theta_{t,l}),
\end{equation}
we have
\begin{equation}
    \begin{split}
        \partial_{l,k}\tilde{U}_t({\bm \theta_t})=&\tilde{U}_{t,L:l+1}W_t \cdot {} [\otimes_{\lambda=0}^{n-1} \prod\limits_{i=0}^{\alpha}R_{\sigma^{(i)}}(\theta_{t,l,\lambda\tau+i})\cdot(-\frac{i}{2}\sigma^{(\alpha)} \delta_{\lambda\tau+\alpha,k}+I \cdot(1- \delta_{\lambda\tau+\alpha,k})) \\
        &\cdot\prod\limits_{i=\alpha+1}^{\tau-1}R_{\sigma^{(i)}}(\theta_{t,l,\lambda\tau+i})]\cdot \tilde{U}_{t,l-1:1} \\
        =&-\frac{i}{2}\tilde{U}_{t,L:l}K\tilde{U}_{t,l-1:1},
    \end{split}
\end{equation}
where $\alpha=k\mod \tau$, n is the total number of qubits, $K=(\otimes_{\lambda=0}^{n-1}\prod\limits_{i=\alpha+1}^{\tau-1}R_{\sigma^{(i)}}(\theta_{t,l,\lambda\tau+i}))^\dagger[(I^{0:\lfloor k/\tau\rfloor}\\\otimes \sigma^{(\alpha)} \otimes I^{\lfloor k/\tau+1\rfloor:n})(\otimes_{\lambda=0}^{n-1}\prod\limits_{i=\alpha+1}^{\tau-1}R_{\sigma^{(i)}}(\theta_{t,l,\lambda\tau+i}))$, $I^{i:j}$ denotes that the operator $I$ acts on the qubits in the range $[i,j)$. Obviously $K$ is a Hermitian operator.

\section{Partial derivatives of loss}
\label{ap:loss_gradient}
The loss $\mathcal{L}_t(\bm \theta_t)$ that conforms to $\mathcal{L}_t(\bm \theta_t) =\mathcal{D}_{\rm MMD}(\tilde{\mathcal{S}}_{t-1}, \mathcal{S}^{'}_{t-1}) =\mathcal{D}_{\rm MMD}(\tilde{U}_t(\bm \theta_t)\tilde{\mathcal{S}}_t, \mathcal{S}^{'}_{t-1})$ can be regarded as the function of $\tilde{U}_t(\bm \theta_t)$, therefore, we have:

\begin{itemize}

\item[$\bullet$] $\bar{F}(\mathcal{S}_{t-1}^{'},\mathcal{S}_{t-1}^{'})$ is a constant term, so $\partial_{l,k}\bar{F}(\mathcal{S}_{t-1}^{'},\mathcal{S}_{t-1}^{'})=0$

\item[$\bullet$] For $\bar{F}(\tilde{S}_{t-1},\tilde{S}_{t-1})$, we have
\begin{equation}
    \begin{split}
        \bar{F}(\tilde{S}_{t-1},\tilde{S}_{t-1})
        =&\mathbb{E}_{|\psi_{t-1}\rangle\in\tilde{S}_{t-1},|\phi_{t-1}\rangle\in\tilde{S}_{t-1}}[|\langle\psi_{t-1}|\phi_{t-1}\rangle|^2]\\
        =&\mathbb{E}_{\rho_{\psi_{t-1}}\in\tilde{S}_{t-1},\rho_{\phi_{t-1}}\in\tilde{S}_{t-1}}{\rm Tr}(\rho_{\psi_{t-1}}\cdot\rho_{\phi_{t-1}}),
    \end{split}
\end{equation} 
where $\rho_{\psi_{t-1}}$ and $\rho_{\psi_{t-1}}$ are the density matrices of $\psi_{t-1}$ and $\phi_{t-1}$, respectively. Then
\begin{equation}
    \begin{split}
        \rho_{\psi_{t-1}}
        =&{\rm Tr_{ancilla}}[\tilde{U}_t(|0\rangle\langle0|\otimes\rho_{\psi_{t}})]\\
        =&{\rm Tr}(\tilde{U}_{t}^{0:n_a}|0\rangle\langle0|\tilde{U}_t^{0:n_a\dagger})\cdot\tilde{U}_{t}^{n_a+1:n}\rho_{\psi_t}\tilde{U}_{t}^{n_a+1:n\dagger}\\
        =&\tilde{U}_{t}^{n_a+1:n}\rho_{\psi_t}\tilde{U}_{t}^{n_a+1:n\dagger},
    \end{split}
\end{equation}
where ${\rm Tr_{ancilla}}$ denotes the partial trace over the ancilla qubits to obtain the reduced density operator for the data qubits. $\tilde{U}_{t}^{0:n_a}$ and $\tilde{U}_{t}^{n_a+1:n}$ represents the part of $\tilde{U}_{t}$ applied to the ancilla qubits and data qubits, respectively. Then, we have
\begin{equation}
    \begin{split}
        \bar{F}(\tilde{S}_{t-1},\tilde{S}_{t-1})
        =&\mathbb{E}_{\rho_{\psi_{t-1}}\in\tilde{S}_{t-1},\rho_{\phi_{t-1}}\in\tilde{S}_{t-1}}{\rm Tr}(\rho_{\psi_{t-1}}\cdot\rho_{\phi_{t-1}})\\
        =&\mathbb{E}_{\rho_{\psi_{t}}\in\tilde{S}_{t},\rho_{\phi_{t}}\in\tilde{S}_{t}}{\rm Tr}(\tilde{U}_{t}^{n_a+1:n}\rho_{\psi_t}\tilde{U}_{t}^{n_a+1:n\dagger}\cdot\tilde{U}_{t}^{n_a+1:n}\rho_{\phi_t}\tilde{U}_{t}^{n_a+1:n\dagger})\\
        =&\mathbb{E}_{\rho_{\psi_{t}}\in\tilde{S}_{t},\rho_{\phi_{t}}\in\tilde{S}_{t}}{\rm Tr}(\rho_{\psi_{t}}\cdot\rho_{\phi_{t}})\\
        =&\bar{F}(\tilde{S}_{t},\tilde{S}_{t})\\
        =&\bar{F}(\tilde{S}_{T},\tilde{S}_{T}).
    \end{split}
\end{equation}
Therefore, $\bar{F}(\tilde{S}_{t-1},\tilde{S}_{t-1})$ is also a constant term, so $\partial_{l,k}\bar{F}(\tilde{S}_{t-1},\tilde{S}_{t-1})=0$.

\item[$\bullet$] For $\bar{F}(\tilde{S}_{t-1},S_{t-1}^{'})$, we have
\begin{equation}
    \begin{split}
        \partial_{l,k}\bar{F}(\tilde{S}_{t-1},S_{t-1}^{'})
        =&\mathbb{E}_{|\tilde{\psi}_{t-1}\rangle\in\tilde{S}_{t-1},|\psi_{t-1}\rangle\in S_{t-1}^{'}}\partial_{l,k}[|\langle\tilde{\psi}_{t-1}|\psi_{t-1}\rangle|^2] \\
        =&\mathbb{E}_{|\tilde{\psi}_{t}\rangle\in\tilde{S}_{t},|\psi_{t-1}\rangle\in S_{t-1}^{'}}\partial_{l,k}[|\langle\tilde{\psi}_{t}|\tilde{U}_t^{'}|\psi_{t-1}\rangle|^2] \\
        =&\mathbb{E}_{\rho_{\tilde{\psi}_{t}}\in\tilde{S}_{t},\rho_{\psi_{t-1}}\in S_{t-1}^{'}}\partial_{l,k}{\rm Tr}[\rho_{\psi_{t-1}}\cdot\tilde{U}_{t}^{'}\rho_{\tilde{\psi}_t}\tilde{U}_{t}^{'\dagger}] \\
        =&\mathbb{E}_{\rho_{\tilde{\psi}_{t}}\in\tilde{S}_{t},\rho_{\psi_{t-1}}\in S_{t-1}^{'}}{\rm Tr}[\rho_{\psi_{t-1}}\cdot\partial_{l,k}\tilde{U}_{t}^{'}\rho_{\tilde{\psi}_t}\tilde{U}_{t}^{'\dagger}] \\
        =&\frac{1}{|\tilde{S}_t||S_{t-1}^{'}|}\cdot(-\frac{i}{2})\sum_{i,j}{\rm Tr}(\rho_{\psi_{t-1,j}}\cdot\tilde{U}_{t,L:l}^{'}[K,\tilde{U}_{t,l-1:1}^{'}\rho_{\tilde{\psi}_{t,i}}\tilde{U}_{t,l-1:1}^{'\dagger}]\tilde{U}_{t,L:l}^{'\dagger}),
    \end{split}
\end{equation}
where $|\psi_{t-1,j}\rangle\in\mathcal{S}_{t-1}$, $|\tilde{\psi}_{t,i}\rangle\in\tilde{\mathcal{S}}_t$, $\tilde{U}^{'}_{t}$ represents the part of $\tilde{U}_{t}$ applied to the data qubits.

\end{itemize}

Thus, the partial derivative of the loss function $\mathcal{L}_t({\bm \theta_t})$ with respect to the parameters $\theta_{t,l,k}$ is:
\begin{equation}
    \begin{split}
        \partial_{l,k}\mathcal{L}_t({\bm \theta_t})
        =&-2\partial_{l,k}\bar{F}(\tilde{S}_{t-1},S_{t-1}^{'}) \\
        =&\frac{i}{|\mathcal{S}|^2}\sum\limits_{i,j}{\rm Tr}(|\psi_{t-1,j}\rangle\langle\psi_{t-1,j}|\cdot\tilde{U}_{t,L:l}^{'}[K,\tilde{U}_{t,l-1:1}^{'}|\tilde{\psi}_{t,i}\rangle\langle\tilde{\psi}_{t,i}|\tilde{U}_{t,l-1:1}^{'\dagger}]\tilde{U}_{t,L:l}^{'\dagger}),
    \end{split}
\end{equation}
where $|\mathcal{S}|=|\mathcal{S}_{t-1}|=|\tilde{\mathcal{S}}_{t-1}|$.

\section{Proof of Lemma \ref{lem:1}}
\label{ap:lem1}

For ${\rm Tr}(AB[V,CU|0\rangle \langle0|U^\dagger C^\dagger]B^\dagger)$ we have
\begin{equation}
    \begin{split}
        &{\rm Tr}(AB[V,CU|0\rangle \langle0|U^\dagger C^\dagger]B^\dagger)\\
        =&{\rm Tr}(ABVCU|0\rangle \langle0|U^\dagger C^\dagger B^\dagger -
        ABCU|0\rangle \langle0|U^\dagger C^\dagger V B^\dagger)\\
        =&\sum (ABVC)_{ib}U_{bc}(|0\rangle \langle0|)_{cd}U^\dagger_{de}(C^\dagger
        B^\dagger)_{ei}-(ABC)_{jb'}U_{b'c'}(|0\rangle \langle0|)_{c'd'}U^\dagger_{d'e'}(C^\dagger 
        V B^\dagger)_{e'j} .
    \end{split}
\end{equation}

Letting $D=ABVC$, $E=C^\dagger B^\dagger$, $F=ABC$, $G=C^\dagger V B^\dagger$, we obtain the following result:
\begin{equation}
    \begin{split}
        &{\rm Tr}(AB[V,CU|0\rangle \langle0|U^\dagger C^\dagger]B^\dagger)\\
        =&\sum D_{ib}U_{bc}|0\rangle \langle0|_{cd}U^\dagger_{de}E_{ei}-F_{jb'}U_{b'1}|0\rangle \langle0|
        U^\dagger_{1e'}G_{e'j}\\
        =&\sum\limits_{\substack{b,b',e,e' \\<2^{n_{data}}}}D_{ib}U_{b1}U^\dagger_{1e}E_{ei}- F_{jb'}U_{b'c'}|0\rangle \langle0|_{c'd'}U^\dagger_{d'e'}G_{e'j}.
    \end{split}
\end{equation}

\section{Proof of Lemma \ref{lem:2}}
\label{ap:lem2}

For $\mathbb{E}_{U \in \mathcal{U}(N)} (\sum A_{ae}U_{e1}U_{1f}^\dagger B_{fa}C_{bg}U_{g1}U_{1h}^\dagger D_{hb})s$, we have
\begin{equation}
    \begin{split}
        \mathbb{E}_{U \in \mathcal{U}(N)} (\sum A_{ae}U_{e1}U_{1f}^\dagger B_{fa}C_{bg}U_{g1}U_{1h}^\dagger D_{hb})
        =\int d\mu(U) A_{ae}U_{e1}U_{1f}^\dagger B_{fa}C_{bg}U_{g1}U_{1h}^\dagger D_{hb}.
        \end{split}
\end{equation}

By applying Eq.\ref{eq:unitary_4} and assuming that N is sufficiently large, we obtain the following result:
\begin{equation}
    \begin{split}
        &\mathbb{E}(\sum A_{ae}U_{e1}U_{1f}^\dagger B_{fa}C_{bg}U_{g1}U_{1h}^\dagger D_{hb}) \\
        \approx & \frac{2}{N^2-1}\sum A_{a1}B_{1a}C_{b1}D_{1b}\\
        = & \frac{2}{N^2-1}(BA)_{11}(DC)_{11},
    \end{split}
\end{equation}

Due to A, B, C, and D are unitary matrices, $\mathbb{E}(\sum A_{ae} U_{e1}U_{1f}^\dagger B_{fa}C_{bg}U_{g1} U_{1h}^\dagger D_{hb}) \in [-\frac{2}{N^2-1}, \frac{2}{N^2-1}]$.

\section{Proof of Theorem \ref{thm:bp}}
\label{ap:bp}
Here we give the proof of Theorem.\ref{thm:bp}. For the gradients Eq.\ref{eq:L_haar_p}, we provide its mean over the unitary group $\mathcal{U}_H(N)$ which obey Haar measure \cite{theory_unitart_group}:
\begin{equation} \label{eq:gradient_mean}
    \langle  \partial_{l,k}\mathcal{L}_H\rangle
    =\int d\mu(U_H)\frac{i}{|\mathcal{S}|^2}\sum\limits_{i,j}{\rm Tr}(|\psi_{j}\rangle\langle\psi_{j}|\cdot\tilde{U}_{L:l}^{'}[K,\tilde{U}_{l-1:1}^{'}\cdot U_{Hi}|0\rangle\langle0|U_{Hi}^\dagger \cdot\tilde{U}_{l-1:1}^{'\dagger}]\tilde{U}_{L:l}^{'\dagger}),
\end{equation}

From Eq.~\ref{eq:unitary_2}, we can deduce that
\begin{equation} \label{eq:unitary_2_extension}
    \int_{\mathcal{U}_H(N)} d\mu(U_H)U_HVU_H^\dagger=\frac{{\rm Tr}(V)}{N}I.
\end{equation}
Then, we can calculate:
\begin{equation}
    \begin{split}
        \langle\partial_{l,k}\mathcal{L}_H\rangle
        =&\frac{i}{|\mathcal{S}|^2}\sum\limits_{i,j}{\rm Tr}(|\psi_{j}\rangle\langle\psi_{j}|\cdot\tilde{U}_{L:l}^{'}[K,\frac{I}{2^{n_{data}}}]\tilde{U}_{L:l}^{'\dagger}) \\
        =&0,
    \end{split}
\end{equation}
where $n_{data}$ represent the number of data qubits.

Next, we compute the variance of the gradient Eq.\ref{eq:L_haar_p}, which is given as
\begin{equation}
    {\rm Var}(\partial_{l,k}\mathcal{L}_H)=\langle(\partial_{l,k}\mathcal{L}_H)^2\rangle-\langle\partial_{l,k}\mathcal{L}_H\rangle^2.
\end{equation}
Due to $\langle\partial_{l,k}\mathcal{L}_H\rangle=0$, we can get=
\begin{equation}
    {\rm Var}(\partial_{l,k}\mathcal{L}_H)=\langle(\partial_{l,k}\mathcal{L}_H)^2\rangle.
\end{equation}

Expand as follows:
\begin{equation}
    \begin{split}
        {\rm Var}(\partial_{l,k}\mathcal{L}_H)
        =-\frac{1}{|\mathcal{S|}^4}(\sum\limits_{b\neq d}[\int d\mu(U_{Hb})&{\rm Tr}(\rho_{\psi_a}\cdot\tilde{U}_{L:l}^{'}[K,\tilde{U}_{l-1:1}^{'}U_{Hb}|0\rangle\langle0|U_{Hb}^\dagger\tilde{U}_{l-1:1}^{'\dagger}]\tilde{U}_{L:l}^{'\dagger}) \\ 
        \int d\mu(U_{Hd})&{\rm Tr}(\rho_{\psi_c}\cdot\tilde{U}_{L:l}^{'}[K,\tilde{U}_{l-1:1}^{'}U_{Hd}|0\rangle\langle0|U_{Hd}^\dagger\tilde{U}_{l-1:1}^{'\dagger}]\tilde{U}_{L:l}^{'\dagger})] \\
        +\sum[\int d\mu(U_{Hx})&{\rm Tr}(\rho_{\psi_i}\cdot\tilde{U}_{L:l}^{'}[K,\tilde{U}_{l-1:1}^{'}U_{Hx}|0\rangle\langle0|U_{Hx}^\dagger\tilde{U}_{l-1:1}^{'\dagger}]\tilde{U}_{L:l}^{'\dagger}) \\
        &{\rm Tr}(\rho_{\psi_j}\cdot\tilde{U}_{L:l}^{'}[K,\tilde{U}_{l-1:1}^{'}U_{Hx}|0\rangle\langle0|U_{Hx}^\dagger\tilde{U}_{l-1:1}^{'\dagger}]\tilde{U}_{L:l}^{'\dagger})]),
    \end{split}
\end{equation}
where $\rho_{\psi_k}=|\psi_k\rangle\langle\psi_k|$. After that, we can calculate:
\begin{equation} \label{eq:gradient_var}
    \begin{split}
        {\rm Var}(\partial_{l,k}\mathcal{L}_H)
        =&\langle\partial_{l,k}\mathcal{L}_H\rangle^2
        -\frac{1}{|\mathcal{S|}^4}\sum[\int d\mu(U_{Hx}){\rm Tr}(\rho_{\psi_i}\cdot\tilde{U}_{L:l}^{'}[K,\tilde{U}_{l-1:1}^{'}U_{Hx}|0\rangle\langle0|U_{Hx}^\dagger\tilde{U}_{l-1:1}^{'\dagger}]\\&\tilde{U}_{L:l}^{'\dagger}) 
        {\rm Tr}(\rho_{\psi_j}\cdot\tilde{U}_{L:l}^{'}[K,\tilde{U}_{l-1:1}^{'}U_{Hx}|0\rangle\langle0|U_{Hx}^\dagger\tilde{U}_{l-1:1}^{'\dagger}]\tilde{U}_{L:l}^{'\dagger})]\\
        =&-\frac{1}{|\mathcal{S|}^4}\sum[\int d\mu(U_{Hx}){\rm Tr}(\rho_{\psi_i}\cdot\tilde{U}_{L:l}^{'}[K,\tilde{U}_{l-1:1}^{'}U_{Hx}|0\rangle\langle0|U_{Hx}^\dagger\tilde{U}_{l-1:1}^{'\dagger}]\tilde{U}_{L:l}^{'\dagger}) \\&
        {\rm Tr}(\rho_{\psi_j}\cdot\tilde{U}_{L:l}^{'}[K,\tilde{U}_{l-1:1}^{'}U_{Hx}|0\rangle\langle0|U_{Hx}^\dagger\tilde{U}_{l-1:1}^{'\dagger}]\tilde{U}_{L:l}^{'\dagger})],
    \end{split}
\end{equation}

We similarly use Eq.~\ref{eq:unitary_4} to deal with the integral in Eq.\ref{eq:gradient_var}. Meanwhile, in accordance with Lemma \ref{lem:1}, we define $A_k=\rho_{\psi_k}\tilde{U}_{L:l}^{'}K\tilde{U}_{l-1:1}^{'}$, $B=\tilde{U}_{l-1:1}^{'\dagger}\tilde{U}_{L:l}^{'\dagger}=\tilde{U}^{'\dagger}$, $C_k=\rho_{\psi_k}\tilde{U}_{L:l}^{'}\tilde{U}_{l-1:1}^{'}=\rho_{\psi_k}\tilde{U}^{'}$, $D=\tilde{U}_{l-1:1}^{'\dagger}K\tilde{U}_{L:l}^{'\dagger}$ in Eq.\ref{eq:gradient_var}. Then, we obtain the following result:
\begin{equation}
    \begin{split}
        {\rm Var}(\partial_{l,k}\mathcal{L}_H)
        =&-\frac{1}{|\mathcal{S}|^4}\sum\limits_{i,j,x}\{\int d\mu(U_{Hx})
        [\sum(A_i)_{ab}(U_{Hx})_{b1}(U^\dagger_{Hx})_{1c}B_{ca}-
        (C_i)_{de}(U_{Hx})_{e1}\\&(U^\dagger_{Hx})_{1f}D_{fd}]
        \cdot [\sum(A_j)_{a'b'}(U_{Hx})_{b'1}(U^\dagger_{Hx})_{1c'}B_{c'a'}-
        (C_j)_{d'e'}(U_{Hx})_{e'1}\\&(U^\dagger_{Hx})_{1f'}D_{f'd'}]
        \}\\
        =&\frac{1}{|\mathcal{S}|^4}\sum\limits_{i,j,x}\{\int d\mu(U_{Hx})
        [-\sum(A_i)_{ab}(U_{Hx})_{b1}(U^\dagger_{Hx})_{1c}B_{ca}(A_j)_{a'b'}(U_{Hx})_{b'1}\\& (U^\dagger_{Hx})_{1c'}B_{c'a'}+
        \sum(A_i)_{ab}(U_{Hx})_{b1}(U^\dagger_{Hx})_{1c}B_{ca}(C_j)_{d'e'}(U_{Hx})_{e'1}(U^\dagger_{Hx})_{1f'}\\&D_{f'd'}+
        \sum(C_i)_{de}(U_{Hx})_{e1}(U^\dagger_{Hx})_{1f}D_{fd}(A_j)_{a'b'}(U_{Hx})_{b'1}(U^\dagger_{Hx})_{1c'}B_{c'a'}-\\&
        \sum(C_i)_{de}(U_{Hx})_{e1}(U^\dagger_{Hx})_{1f}D_{fd}(C_j)_{d'e'}(U_{Hx})_{e'1}(U^\dagger_{Hx})_{1f'}D_{f'd'}
        ]\}
    \end{split}
\end{equation}

By applying Lemma\ref{lem:2}, we obtain the following result:
\begin{equation}
    |{\rm Var}(\partial_{l,k} \mathcal{L}_H)| \leq \frac{1}{|\mathcal{S}|^4}\cdot\frac{2}{2^{n_{data}}-1}\cdot4
    =\frac{8}{|\mathcal{S}|^4(2^{n_{data}}-1)}.
\end{equation}

Next, we conduct an analysis of approximate Haar random states. For approximate Haar random states, we present the following properties:
\begin{equation}
    \int_{approximate\,Haar}U_{A}\rho U_{A}^{\dagger}d(U_A)=\int_{Haar}U_H\rho U_H^\dagger d(U_H)+\varepsilon,
\end{equation}
\begin{equation}
    \begin{split}
        &\int_{approximate\,Haar}(U_A)_{i_1j_1}(U_A)_{i_2j_2}(U_A^\dagger)_{i_1'j_1'}(U_A\dagger)_{i_2'j_2'}d(U_A)\\
        =&\int_{Haar}(U_H)_{i_1j_1}(U_H)_{i_2j_2}(U_H^\dagger)_{i_1'j_1'}(U_H^\dagger)_{i_2'j_2'} d(U_H)+\zeta,
    \end{split}
\end{equation}
where $\varepsilon, \zeta$ denote the deviations of approximate Haar random states from exact Haar random states.

Therefore, for Eq.\ref{eq:L_ahaar_p}, it is easy to obtain:
\begin{equation}
    \langle \partial_{l,k}\mathcal{L}_A\rangle=\varepsilon,
\end{equation}
\begin{equation}
    \begin{split}
        {\rm Var}(\partial_{l,k}\mathcal{L}_A)=&\langle (\partial_{l,k}\mathcal{L}_A)^2\rangle-\langle \partial_{l,k}\mathcal{L}_A\rangle^2 \\
        =&\langle\partial_{l,k}\mathcal{L}_A\rangle^2
        -\frac{1}{|\mathcal{S|}^4}\sum[\int d\mu(U_{Ax}){\rm Tr}(\rho_{\psi_i}\cdot\tilde{U}_{L:l}^{'}[K,\tilde{U}_{l-1:1}^{'}U_{Ax}|0\rangle\langle0|U_{Ax}^\dagger\tilde{U}_{l-1:1}^{'\dagger}]\\&\tilde{U}_{L:l}^{'\dagger}) 
        {\rm Tr}(\rho_{\psi_j}\cdot\tilde{U}_{L:l}^{'}[K,\tilde{U}_{l-1:1}^{'}U_{Ax}|0\rangle\langle0|U_{Ax}^\dagger\tilde{U}_{l-1:1}^{'\dagger}]\tilde{U}_{L:l}^{'\dagger})]-\langle \partial_{l,k}\mathcal{L}_A\rangle^2 \\
        \leq&\frac{8}{|\mathcal{S}|^4(2^{n_{data}}-1)}+4\zeta
    \end{split}
\end{equation}

\end{document}